\title{Consider the Alternatives: Navigating Fairness-Accuracy Tradeoffs via Disqualification}
\author{Guy N.\ Rothblum\thanks{This project has received funding from the European Research Council (ERC) under the European Union’s Horizon 2020 research and innovation programme (grant agreement No. 819702), from the Israel Science Foundation (grant number 5219/17), from the U.S.-Israel Binational Science Foundation (grant 2018102), and from the Simons Foundation Collaboration on the Theory of Algorithmic Fairness. Part of this work was done while the author was visiting Microsoft Research.}
\\Weizmann Institute of Science\\\texttt{rothblum@alum.mit.edu}
\and
Gal Yona\thanks{This project has received funding from the European Research Council (ERC) under the European Union’s Horizon 2020 research and innovation programme (grant agreement No. 819702), from the Israel Science Foundation (grant number 5219/17) and from the Simons Foundation Collaboration on the Theory of Algorithmic Fairness. This research was partially supported by the Israeli Council for Higher Education (CHE) via the Weizmann Data Science Research Center, and by a research grant from the Estate of Tully and Michele Plesser.}
\\Weizmann Institute of Science\\\texttt{gal.yona@weizmann.ac.il}}
\begin{document}

\maketitle

\begin{abstract}

In many machine learning settings there is an inherent tension between fairness and accuracy desiderata. How should one proceed in light of such trade-offs? 
In this work we introduce and study \emph{$\gamma$-disqualification}, a new framework for reasoning about fairness-accuracy tradeoffs w.r.t a benchmark class $\H$ in the context of supervised learning. Our requirement stipulates that a classifier should be disqualified if it is possible to improve its fairness by switching to another classifier from $\H$ without paying ``too much'' in accuracy. The notion of ``too much`` is quantified via a parameter $\gamma$ that serves as a
vehicle for specifying acceptable tradeoffs between accuracy and fairness, in a way that is independent from the specific metrics used to quantify fairness and accuracy in a given task. Towards this objective, we establish principled translations between units of accuracy and units of (un)fairness for different accuracy measures. We show $\gamma$-disqualification can be used to easily compare different learning strategies in terms of how they trade-off fairness and accuracy, and we give an efficient reduction from the problem of finding the optimal classifier that satisfies our requirement to the problem of approximating the Pareto frontier of $\H$.

\end{abstract}

\section{Introduction}
\label{sec:intro}



Underlying the study of \emph{algorithmic fairness} \cite{dwork2012fairness, hardt2016equality, kusner2017counterfactual, kearns2018preventing, hebert2017calibration} in the context of supervised learning is the fundamental tension between different fairness desiderata and some other desired property, such as accurate predictions \cite{kleinberg2016inherent, chouldechova2017fair, chen2018my, pleiss2017fairness}. This tension exists for a broad array of non-discrimination requirements. For example, demographic parity (see e.g. \cite{dwork2012fairness}) requires that the predicted labels are independent of group membership -- a requirement that stands in direct tension with accuracy when when the base rates (expected labels) between the groups differ. In other cases, the tension is more subtle but still exists. For example, the notion of equal opportunity \cite{hardt2016equality} requires that the error rates of the classifier (false positives and/or false negatives) be equal across groups. While a perfect classifier that never errs does satisfy this requirement, in settings where perfect predictions are impossible (e.g. due to missing data or inherent uncertainty), satisfying error rate parity can stand in direct conflict with maximizing accuracy.\footnote{For example, suppose that the only data feature is group membership (in $S$ or in $T$), that 80\% of $T$ have label 1 and 20\% of $S$ have label 1. For non-trivial accuracy, we need to predict a 1-label with higher probability for members of $S$ than for members of $T$, but this will result in a disparity in false negatives and false positives between the two groups. }  Importantly, in practice the tension between fairness and accuracy may (and does) arise even when the learner is well-intentioned, e.g. because of missing data or constraints on the complexity of the learned predictor (which are necessary to ensure generalization).



\begin{figure*}[ht]
\centering
\captionsetup{width=0.9\linewidth}

\includegraphics[width=1.0\linewidth]{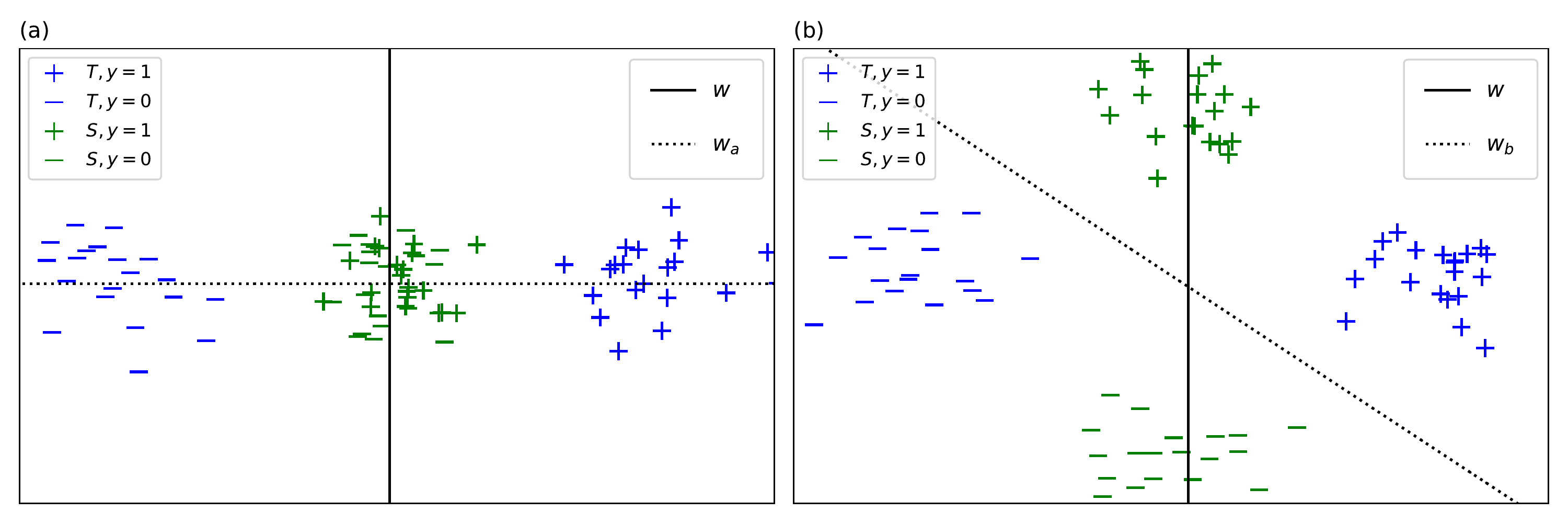}
\caption{In both cases, the classifier $h_w(x) = \text{sign}\langle w, x \rangle$ does not satisfy error rate parity, whereas the classifiers $h_{w_a}(x) = \text{sign}\langle w_a, x \rangle$ (left) and $h_{w_b}(x) = \text{sign}\langle w_b, x \rangle$ (right) do.
 }
\label{fig:balance-issue}
\end{figure*}

When fairness and accuracy are in conflict, as they often are, how can we reason about the possible trade-offs? If we require that error rates be balanced, this might preclude reasonable accuracy. We could opt, instead, to bound the error-rate disparity \cite{zafar2017fairness}, but what level of disparity would be appropriate? These considerations go well beyond the technical aspects of the problem at hand, and the answers must be informed by ethical, societal and legal considerations. In particular, any criterion that only considers a predictor in light of its error rate imbalance is inherently limited. Drawing inspiration from the legal literature, the following discussion of Disparate Impact doctrine from \cite{barocas2016big}, p694, is instructive:
\begin{quote}
    ``Disparate impact is not concerned with the intent or motive for a
policy; where it applies, the doctrine first asks whether there is a disparate
impact on members of a protected class, then whether there is some business
justification for that impact, and finally, \emph{whether there were less discriminatory
means of achieving the same result}''
\end{quote}

Consider the tension between accuracy and fairness in light of this doctrine: we have disparate impact in the form of imbalanced error rates (or another measure of unfairness). Accuracy requirements might provide a business justification for this imbalance. However, {\em we must also consider the alternatives}, and whether there are less discriminatory means of achieving the same result.

\vspace{1em}

We demonstrate this point through a simple example. We consider a simple setup with two groups $S$ and $T$ towards which we want to guarantee fair treatment.\footnote{While we focus on the two-group setup, our approach naturally lends itself to a ``multi-group'' extension \cite{hebert2017calibration, kearns2018preventing}, where instead of only two groups we consider a much a larger collection of groups; see the discussion in Section \ref{section:discuss}.} Consider the two scenarios described in Figure \ref{fig:balance-issue}: individuals from $T$ are denoted in blue, and individuals from $S$ are denoted in green. The learning task at hand is finding a low-error hyperplane. In both cases the data for $T$ is perfectly separable, and the hyperplane $w$, which is tailored for $T$, has no false positives or false negatives on $T$. On the set $S$, however, the hyperplane $w$ performs poorly (in both scenarios): half of its classifications will be false positives or false negatives. Given the disparity in error rates between $S$ and $T$, should we rule out the predictor $w$ for being discriminatory? Here, the two examples may diverge. In scenario $(b)$ the hyperplane $w$ is clearly and blatantly unfair: the hyperplane $w_b$, which takes $S$ into account, has zero error-rate disparity, and is no less accurate than $w$ (indeed, its loss is smaller).
In scenario $(a)$, on the other hand, the data of $S$ is inseparable and there is no clear alternative to $w$. The only clear alternative with balanced error rates is $w_a$, which has huge error. While error rate disparity is an intuitively appealing measure, on its own it does not allow us to make distinctions between very different scenarios, such as $(a)$ and $(b)$.\footnote{We emphasize that we do not assert  the hyperplane $w$ is a ``fair'' classifier in scenario $(a)$: this is a task-specific and context-dependent question. Our main point is that studying the set of alternative classifiers is instructive, and points to the glaring unfairness of using $w$ in scenario $(b)$. Moreover, studying these alternatives can also be instructive in scenario $(a)$: it may lead to the conclusion that we need to collect new data features or consider a richer hypothesis class.}

\paragraph{Exact disqualification.} The example discussed above suggests a natural first step towards formalizing the disparate impact doctrine: a classifier should be ``disqualified'' if there is an alternative classifier with similar accuracy that is more fair, as was the case for the hyperplane $w$ in scenario $(b)$ above. Importantly, this requires that we specify a set $\H$ of alternatives to be considered. Most naturally, $\H$ could be the hypothesis class used by a learning procedure. However, there is an important subtlety here: what do we mean by an alternative classifier being ``more fair''? We might define the amount of unfairness in absolute value, e.g. the absolute value of the error disparity between the two groups, but this is problematic. Consider two equally accurate classifiers $h_S$ and $h_T$, where $h_g$ denotes a classifier that has an error disparity in favour of group $g \in \set{S,T}$. When measuring unfairness in absolute value, neither classifier disqualifies the other! But this is not what we want, as intuitively, neither classifier should be ``kosher'' to use: For example, from the perspective of $S$, it is unjustified to use $h_T$ (which has a high error rate on $S$) when there is an equally-accurate classifier that $S$ prefers. Similarly, $T$ could make a similar argument against $h_S$. This discussion highlights that in order to prevent unfair treatment towards both groups, our notion of fairness must carefully take into account the \emph{directionality} of the fairness violations. 


\paragraph{Beyond Exact Disqualification.}
Exact disqualification gives us a framework to compare a given classifier to a set of available alternatives, but it does not fully resolve the question of navigating fairness and accuracy tradeoffs because it leaves many classifiers incomparable. For example,
if relative to a classifier $h$, a classifier $h'$  improves fairness by $\tau$ but hurts accuracy by $\tau'$, neither classifier disqualifies the other. 
Choosing between such classifiers implicitly requires comparing a fairness gain of $\tau$ to an accuracy loss of $\tau'$. This comparison is not straightforward, since these quantities are typically not measured in the same ``units'' - for example, accuracy is usually averaged over the entire population whereas fairness typically quantifies some across-group disparity. This difficulty is further exacerbated by the fact that both fairness and accuracy are in general two abstract concepts, that can be operationalized in different ways (e.g. by choosing different metrics based on the task in question).

\vspace{1em}

 We address these challenges by quantifying the relative importance of fairness vs. accuracy. In particular, our  notion of $\gamma-$disqualification stipulates that a classifier should be disqualified if there is a fairer alternative that does not degrade accuracy {\em ``too much''}. This is more restrictive than exact disqualification: we enlarge the set of disqualifying alternatives, by including some that do degrade accuracy, so long as the accuracy degradation is not ``too much'' larger than the improvement in fairness. The notion of ``too much'' is quantified using a parameter $\gamma \geq 0$. Importantly, our objective is for $\gamma$ to reflect the acceptable tradeoffs between fairness and accuracy in a way that is independent of the specific metrics chosen to operationalize these concepts.  As discussed above, this requires specifying an appropriate normalization to first bring fairness and accuracy to the same ``units'', so the parameter $\gamma$ can meaningfully specify acceptable accuracy-fairness tradeoffs.  This is a crucial ingredient (and contribution) of our framework. 
The larger $\gamma$ is, the more weight we give to any degradation in the classifier's accuracy, and the less restrictive the requirement becomes:

\begin{center}
    1 unit of accuracy $\equiv \,\, \gamma$ units of fairness \\
    1 unit of fairness $\equiv \,\, 1/\gamma$ units of accuracy
\end{center}

In particular, setting $\gamma = \infty$ recovers exact disqualification, whereas $\gamma=0$ means that we require that the classifier be optimally fair within $\H$. For example, if the fairness notion we use is parity of error rates and the class $\H$ includes a balanced classifier (e.g. the constant classifier), then setting $\gamma=0$ is equivalent to requiring error rate parity. The range of values $\gamma \in (0,\infty)$ specifies a range of acceptable fairness-accuracy trade-offs.  We refer to the resulting requirement in short as $\gamma$-Fairness. Our framework can be instantiated with different loss functions, capturing different measures of accuracy, and their trade-offs with different measures of (un)fairness.

\subsection{Related work}

\paragraph{Other approaches to formalizing ``disparate impact''.} We defer the full discussion of the different avenues by which the fairness community has proceeded in light of the basic tradeoffs between fairness and accuracy to Section \ref{section:discuss}, and here discuss how disqualification differs from perhaps the other natural approach to addressing the ``business necessity'' clause of the disparate impact doctrine, which is to maximize fairness subject to an accuracy constraint \cite{zafar2017fairness}. They key difference is that this approach results in a model whose accuracy is not far from the accuracy of the optimal model, whereas such a property is not guaranteed in our framework, since it could be the case that all the non-disqualified models are significantly less accurate than the optimal one.  However, we argue that this property is not necessarily desirable. In particular, the fact that the eventual model should be ``not significantly less accurate than the optimal model'' encodes a normative assumption, that may not be valid in all cases.\footnote{To see this, consider an extreme case in which the data is highly biased, such that all high-accuracy models are in reality extremely unfair. In this situation, it’s not necessarily true that we want to insist on returning an accurate model (in fact, if we sufficiently value fairness, we might insist on returning a model that is fair and hence with very low accuracy).} Our framework handles such subtleties, because by definition,  a model with significantly lower accuracy will only be used if (i) there are significant trade-offs between fairness and accuracy, and (ii) fairness is deemed significantly more important than accuracy (as portrayed in the value of $\gamma$). 

\paragraph{Disqualification vs multi-objective optimization (MOO).}

In spirit, exact disqualification resembles the well studied notion of Pareto efficiency (a classifier should be disqualified if it is Pareto dominated by another classifier in $\H$); similarly, $\gamma$-disqualification resembles further restricting the set of Pareto efficient solutions via an a-priori preference method in which additional information is elicited from a decision-maker (see Chapter 3 in \cite{hwang2012multiple} for a detailed overview). This is similar in spirit to the value of $\gamma$ in our work. However, while similar in motivation, on a technical level the concepts of disqualification and Pareto efficiency are distinct. To see this, recall that in our framework the groups $S$ and $T$ are considered symmetric, in the sense that unjustified preferential treatment to either group is considered unfair. This means that ``unfairness`` cannot be measured in absolute value: for example, two equally accurate classifiers $h_S$ and $h_T$ (where $h_g$ is unfair towards group $g \in \set{S,T}$) may both be on the Pareto frontier. Specifically, in our work the direction of the fairness violation that is pertinent depends on the classifier in question: This breaks the analogy to Pareto efficiency, that like other concepts in MOO, is defined and evaluated w.r.t a \emph{fixed} set of objective functions.\footnote{However, our treatment of fair risk minimization in Section \ref{section:erm} reveals that while the two solution concepts are fundamentally distinct, we can use the established work on Pareto efficiency to derive optimal $\gamma$-fair classifiers.} As a special case, this also explains how our framework is different from the widely used practice of fairness-regularized risk minimization \cite{kamishima2012fairness, zafar2017fairness, donini2018empirical}. Optimizing a regularized objective (a standard loss term combined with a fairness loss term, scaled by some parameter $\lambda$) can be viewed as scalarizing a MOO problem, so the question of what exactly is the fairness loss remains. Thus, for similar arguments to those made above, it will not be the case that minimizing a fairness regularized objective will yield a valid solution under our framework.

A second important  distinction is that in our work we aim to formalize a framework where a single value $\gamma$ can capture the trade-offs deemed acceptable between fairness and accuracy; specifically, this value should not depend on the metrics used to quantify what fairness and accuracy mean for a particular task. This is important as indeed fairness and accuracy can mean different things in different contexts, and requires setting up principled translations between units of fairness and units of accuracy, which serves as a fundamental part (and contribution) of our work.


\subsection{Our contributions}

Our primary contributions are:

\begin{enumerate}
    \item \textbf{Formalizing $\gamma$-disqualification}, a flexible framework for navigating the space of fairness-accuracy tradeoffs with respect to a benchmark class $\H$. The framework can be instantiated with (potentially different) loss functions for measuring accuracy and for measuring disparity in predictions, and is parametrized by a ``scaling function'' that compares differences in accuracy to differences in disparities.  
    
    \item \textbf{Properly scaling fairness and accuracy.} 
    The scaling function is an important ingredient in our framework, and we develop a methodology for choosing it. One aspect we highlight is the minimal level  $\gamma$ for which the Bayes optimal predictor is not $\gamma$-disqualified by any other classifier. When possible, we aim to select the scaling function in a way that ``anchors'' this value at $\gamma = 1$. We instantiate this approach for two commonly used accuracy measures. First, for the squared loss, we put forward a natural scaling function, prove that it satisfies the above requirement, and also show that it has several desirable properties in the regime $\gamma < 1$.  Second, we consider the 0/1 loss and show that it behaves quite differently: there is no ``reasonable'' scaling function that guarantees the above requirement. These ``case studies'' are valuable  for enabling deployment of our framework, and they also illuminate how different choices of loss functions can quantitatively affect the tension between fairness and accuracy.

    \item \textbf{Fair risk minimization.} We present an algorithm that, given a dataset of labeled examples and the parameter $\gamma$, finds an approximately optimal (most accurate) predictor that is not $\gamma-$disqualified by another classifier in a given hypothesis class $\H$. The algorithm is stated as a reduction to the task of approximating the Pareto frontier of $\H$, where the latter task is well studied, and can be accomplished efficiently for simple classes such as linear regression. 

\end{enumerate}

\paragraph{Organization.} The rest of this manuscript is organized as follows. In Section \ref{section:pareto} we formalize the $\gamma-$disqualification framework. Section \ref{section:scaling} discusses the question of determining the appropriate ``translation'' between accuracy and fairness. In Section \ref{section:erm} we study fair risk minimization. Finally, Section \ref{section:discuss} discusses example applications, an extension to a ``multi-group`` setup and further related work.  Full proofs are deferred to the appendix.

\section{Disqualification}
\label{section:pareto}
\paragraph{Setup.} We use $\X$ to denote the feature space and $A \in \set{0,1}$ group membership, and consider randomized classifiers $h:\X\times A\to\hat{\Y}=[0,1]$, where $h(x,g)$ is the probability that $h$ will predict a label $1$ for an individual with features $x$ from group $g\in\set{S,T}$. We use $\ell_A: Y \times \hat{Y} \to [0,1]$ to denote the loss we use to measure global accuracy and $\ell_B: Y \times \hat{Y} \to [0,1]$ to denote the loss we use to measure disparities between groups. 

\paragraph{Quantifying unfairness.}

We say that a classifier is \emph{loss balanced} (for the positive class) w.r.t $S$ and $T$ if the average loss $\ell_B$ of individuals with $y=1$ is the same across $S$ and $T$. We quantify how far a given classifier is from satisfying this requirement, in a specific direction (e.g. how worse-off are the members of $S$ are, relative to the members of $T$) via the notion of \emph{loss imbalance}:

\begin{definition}[Loss imbalance]
\label{def:loss-imb}
Fix two groups $S,T$ and a loss $\ell_B$. The \emph{loss imbalance} (for the positive class) of a classifier $h: \X \to [0,1]$ in the direction $T \to S$ is

\begin{equation*}
    \text{dLossImb}(h; \, T \to S, \ell_B) = \psi(h,S) - \psi(h,T)
\end{equation*}

where $\psi(h,C) = \E_{x,  g, y \sim \P}[\ell(h(x), y)  \, \vert \, y=1, g = C]$.
\end{definition}

We note that our notion of loss imbalance (Definition \ref{def:loss-imb}) can be viewed as quantifying unfairness, where the notion of fairness generalizes many common existing group fairness definitions. In particular, when $\ell_B$ is the (expected) 0/1 loss, it recovers both  
\emph{Balance for the positive class} \cite{kleinberg2016inherent} and \emph{Equal Opportunity} \cite{hardt2016equality}, depending on whether the classifiers in questions are binary or randomized. See Appendix \ref{appendix:loss-imbalance} for details. For the rest of this paper we consider randomized classifiers and think of $\ell_B$ as the expected 0/1 loss.

\paragraph{Formalizing $\gamma$-disqualification.}

We want to say that a classifier $h'$ disqualifies a second  classifier $h$ if the former improves the loss imbalance (as measured w.r.t  $\ell_{B}$) over the latter, without hurting global accuracy (as measured w.r.t $\ell_{A}$) too much --  with the exact tradeoff specified by the parameter $\gamma$. However, as we discuss in Section \ref{section:scaling}, naively comparing the difference in the two quantities is an ``apples to oranges'' comparison. We address this by applying a scaling function to the difference in accuracy before comparing it to the difference in imbalance.  In principle, the scaling is allowed to depend on $\gamma$, so the scaling function is a mapping $f$ from $\R \times (0,\infty)$ to $\R$. For a fixed value $\gamma \in (0, \infty)$, we use $f_\gamma$ to denote the resulting function $f_\gamma: \R \to \R$.

\begin{definition*}[$\gamma$-Disqualification]
A classifier $h'$  $\gamma$-disqualifies a classifier $h$ w.r.t losses $\ell_A$ and $\ell_B$ and a scaling function $f$ if
\begin{equation}
\label{eqn:intro:disq}
    dLossImb(h;\ell_{B})-dLossImb(h';\ell_{B})> f_{\gamma}(\max\set{\ell_{A}(h')-\ell_{A}(h),\, \, 0})
\end{equation}

where $\mathit{dLossImb}(\cdot)$ is computed in the direction for which $dLossImb(h;\ell_{B}) \geq 0$.

\end{definition*}

Note that to assess whether $h'$ disqualifies $h$, we first take into account the direction of the fairness violation of the original classifier, $h$. This is designed to ensure that if $h$ is unfair e.g. because it favours the members of $T$ over $S$ (i.e. the imbalance of $h$ is positive in the direction $T \to S$; see Definition \ref{def:loss-imb}), only a classifier that improves the imbalance \emph{in this direction} can disqualify $h$.


\paragraph{``Consider the alternatives'': $\gamma$-Fairness.} Finally, we refer to the requirement that 
 a classifier not be disqualified by any alternative in a class $\H$ of alternative classifiers as $\gamma$-Fairness:

\begin{definition*}[$\gamma$-Fairness] Fixing loss functions $\ell_A$ and $\ell_B$ and a scaling function $f_\gamma$, we say that a classifier $h$ satisfies $\gamma$-fairness w.r.t  $\H\subseteq\hat{Y}^{X\times A}$ if no classifier $h'\in \H$ $\gamma$-disqualifies $h$ w.r.t $\ell_A$, $\ell_B$ and $f_\gamma$. When $\H$ is unconstrained, $\H = \hat{Y}^{X\times A}$, we simply say that $h$ is $\gamma$-fair.
\end{definition*}

\section{Specifying the scaling function}
\label{section:scaling}
Given losses $\ell_A$ and $\ell_B$ for measuring accuracy and fairness (respectively), how should we choose the scaling function that ``translates'' units of accuracy into units of fairness? As a warm-up, we revisit the simple examples of Figure \ref{fig:balance-issue}. Recall that there we were concerned with binary classifiers, and imbalance was measured as the disparity in \TPR~ across $S$ and $T$ (so $\ell_{A}$ and $\ell_{B}$ are both the 0/1 loss). Relative to $h_w$, the classifier $h_{w_{a}}$ improves fairness by $0.5$ and hurts accuracy by $0.25$ -- but how should we think about comparing between these numbers?  One natural perspective is to consider the (worst-case) impact a single individual can have on both accuracy and fairness. In this case, even though $\ell_{A}=\ell_{B}$, the (potential) contribution of a single individual to the change in  unfairness is much greater than it is to the change in accuracy loss, since the latter is measured globally over the entire population whereas fairness conditions on both group membership and label. This means that at the very minimum, the scaling function we choose must take this into account by appropriately ``up-weighting''  accuracy  before comparing it to the fairness. 
 
 \vspace{1em}
 
 To make this intuition more precise (and to generalize to arbitrary choices of $\ell_A$ and $\ell_B$), consider the following hypothetical question: How much can a certain improvement in accuracy -- resulting from changing the prediction of a single individual -- impact unfairness, in the worst case? To make this concrete, consider some classifier $h$ and individual $x$, and define a new classifier $h'$ as follows:
 
 \begin{equation*}
     h'(\tilde{x})=\begin{cases}
h^{\star}(\tilde{x}) & \tilde{x}=x\\
h(\tilde{x}) & \tilde{x}\neq x
\end{cases}
 \end{equation*}
 
 That is, $h'$ is identical to $h$, except we switched the prediction of $x$ from $h(x)$ to  $h^{\star}(\tilde{x})$, which we use to denote the Bayes optimal prediction for $\tilde{x}$ according to $\ell_{A}$. Recall that in general, the Bayes optimal predictor $h^{\star}$ is maximally accurate (w.r.t $\ell_{A}$), but potentially unfair (w.r.t $\ell_{B}$). Thus, in comparing $h'$ to $h$, accuracy has improved (say by $\tau_{A}(x)$) while fairness has potentially deteriorated (say by $\tau_{B}(x)$). Does $h$ disqualify $h'$? According to Equation (\ref{eqn:intro:disq}), the original classifier $h$ $\gamma$-disqualifies $h'$ if the scaling function $f_{\gamma}$ is such that 

\begin{equation*}
    \tau_{B}(x)>f_{\gamma}(\tau_{A}(x)).
\end{equation*}

Generalizing this argument to arbitrary choices of the initial classifier $h$ and individual $x$, we conclude that w.r.t some fixed scaling function $f_{\gamma}$, the Bayes optimal predictor $h^{\star}$ will be $\gamma^{\star}$-fair, where $\gamma^{\star}\leq\infty$ is the smallest possible value that guarantees that for any initial classifier $h$, the maximal deterioration in fairness $\tau_{B}(x)$ possible from some improvement in accuracy $\tau_{A}(x)$ does not exceed $f_{\gamma^{\star}}(\tau_{A}(x))$. Given a scaling function $f_{\gamma}$, the value of $\gamma^*$ provides a threshold, where improving accuracy (moving individuals to the Bayes optimal prediction) will not lead to disqualification (even though it might degrade fairness).

\vspace{1em}

The above discussion leads to an important principle in choosing the translation function: whenever possible, we want the Bayes optimal predictor to be fair with $\gamma^{\star}=1$. By ``anchoring'' the value of $\gamma^{\star}$ to a fixed value, such as $1$, we guarantee that the extent to which $h^{\star}$ is fair is invariant under scalar multiplication of the loss function $\ell_{A}$. We view this as a desirable property: the accuracy and fairness losses are (initially) incomparable, switching from measuring accuracy using $\ell_{A}$ to measuring accuracy using $100\cdot\ell_{A}$ does not meaningfully change the fairness-accuracy trade-offs in question -- which is precisely what our notion of fairness is aimed at capturing. 
\vspace{1em}

Thus, given a pair of losses $(\ell_{A},\ell_{B})$, we aim to choose the scaling function in a way that guarantees that the Bayes optimal classifier according to $\ell_{A}$ will be $\gamma$-fair with $\gamma=1$. In this section, we fix $\ell_B$ as the 0/1 loss and instantiate this approach w.r.t two common loss functions for measuring accuracy: the squared loss and the 0/1 loss. For the squared loss we put forward a natural scaling function, prove that it satisfies the above requirement, and also show that it has several desirable properties in the regime $\gamma < 1$. On the other hand, for the 0/1 loss, we show that there is no ``reasonable'' scaling function that guarantees the Bayes optimal classifier is $1-$fair. This is because the loss is (in a certain sense) highly ``non-Lipschitz'': achieving tiny accuracy improvements can require an unbounded degradation in fairness. This negative result highlights that, for these loss functions, fairness and accuracy might be wildly conflicting.


\subsection{Scaling w.r.t the squared loss}

For the squared loss  $\ell_A(y, \hat{y}) = (y - \hat{y})^2$ we consider the following scaling function:

\begin{equation}
\label{eqn:scaling-l2}
    f_\gamma(a)=\sqrt{\gamma \cdot \frac{2a}{\eta}}, \quad \text{where} \, \eta=\min_{g\in \set{S,T}}\Pr_{x,y}[x\in g \land y=1]
\end{equation}

The scaling function first applies a linear scaling to account for the fact that fairness and accuracy are computed by aggregating over different-sized sets (akin to the ``warm-up`` discussion above), and then applies a square root to account for the discrepancy between $\ell_A$ and $\ell_B$. We prove that it indeed satisfies the principle described above:

\begin{result}
\label{result:1-fair}
When $\ell_{B}$ is the expected 0-1 loss and $\ell_{A}$ is the squared loss, the Bayes optimal predictor w.r.t $\ell_{A}$, $h^{\star}(x,g)=\E_{\P}[y\vert x,g]$ is $\gamma$-fair for $\gamma=1$ w.r.t the  scaling function defined in Equation (\ref{eqn:scaling-l2}).

\end{result}
See Appendix \ref{appendix:thm1} for the proof. For intuition,
we note that one way in which we can always transform $h^\star$ into a perfectly balanced predictor is to increase the predictions for everyone in $S$ by $ \varepsilon \triangleq Imb(h^\star) $. This transformation increases the squared loss by exactly $\varepsilon^2$ (scaled to the relative mass of $S$). Thus, for this new classifier to not disqualify $h^\star$ when $\gamma=1$, the scaling function should be chosen such that $\varepsilon \leq f_1(\varepsilon^2)$. 

\vspace{1em}
We note that the principle described above does not fully constrain the form of the scaling function, since it only specifies a constraint for $\gamma = 1$. We additionally prove that our choice of the scaling function given in Equation (\ref{eqn:scaling-l2}) (and specifically, the incorporation of $\gamma$ under the square root) guarantees another desirable property:  that convex combinations of the Bayes optimal predictor (which is fair for $\gamma=1$) and the optimal (most accurate) perfectly balanced predictor (which is fair for $\gamma=0$) are also fair (for the class of convex combinations of these two classifiers, and for $\gamma$ specified by the convex combination):

\begin{result}
\label{result:gamma-fairness}
Let $h_1  = \E_{\P}[y\vert x,g]$ and $h_0$ denote the \emph{optimal} $\gamma$-fair classifiers for $\gamma=1$ and $\gamma=0$, respectively. Furthermore, define   $\tilde{\H}$ as the collection of all convex combinations of \emph{some} 1-fair and \emph{some} 0-fair classifiers. Then,
\begin{enumerate}[(i)]
    \item  The classifier $h_\gamma = \gamma \cdot h_1 + (1-\gamma) \cdot h_0 \in \tilde{\H} $ is $(\gamma, \tilde{\H})$-fair, but
    
    \item There is a $0$-fair $h'_0$ such that the classifier  $h'_\gamma = \gamma \cdot h_1 + (1-\gamma) \cdot h'_0 \in \tilde{\H}$  is not $(\gamma, \tilde{\H})$-fair.
\end{enumerate}
\end{result}

See Appendix \ref{appendix:thm2} for the proof. Theorem \ref{result:gamma-fairness} gives some intuition for fairness in the regime $0 < \gamma < 1$, and crucially relies on our choice of scaling function (and in particular on the way that $\gamma$ affects the scaling). The first part of Theorem  \ref{result:gamma-fairness} highlights that in this case, $\gamma \in (0,1)$ serves as a ``knob'' that, as it shrinks, brings down the imbalance level  -- from $dImb(h_1)$ (which is too high when $\gamma < 1$), to a sufficiently low level (here, $\gamma \cdot dImb(h_1)$). However, the second part of the theorem highlights that there is more to our fairness requirement than simply reaching a sufficiently low level of imbalance: Indeed, $h_\gamma$ and $h'_\gamma$ are both \emph{equally imbalanced}, yet only the  first is fair. This is in line with our initial motivation, in which we called for a more relative (context-dependent) perspective, that considers the alternatives and not only the objective level of balance (or imbalance).

\subsection{Scaling w.r.t the 0/1 loss}

 Interestingly, not every combination of losses $\ell_A$ and $\ell_B$ has a scaling function that can guarantee the Bayes optimal classifier is always fair for $\gamma=1$.  We prove this is the case when we switch $\ell_A$ from the squared loss to the expected 0/1 loss.
 
\begin{result}
\label{result:pareto-l1}
Fix $\ell_{B}$ and $\ell_{A}$ to be  the expected 0/1 loss. Then there is no ``reasonable''  scaling function $f_\gamma$ that guarantees that the Bayes optimal classifier for the 0/1 loss is $\gamma$-fair for $\gamma=1$.
\end{result}

Here, the requirement on the scaling function $f_\gamma$ is minimal: we only require that $f_\gamma$ does not ``blow up'' when $\gamma < \infty$. See Appendix \ref{appendix:thm3} for the proof. 
The intuition for Theorem \ref{result:pareto-l1} is that the binary nature of the Bayes optimal classifier for the expected 0-1 loss can work to amplify even very small differences between groups, in a way that our disqualification framework considers unfair (unless $\gamma = \infty$). To see this, consider the simple case in which there is no information except group membership. In this case, the Bayes optimal classifier predicts $0.0$ for an individual from a group whose base rate is below $0.5$, and $1.0$ for an individual from a group whose base rate is above $0.5$. This has the effect of \emph{amplifying} differences between the two groups: even a very small difference in base rates (e.g., $0.49$ for $S$ and $0.51$ for $T$)  translates to a maximal imbalance of 1.0. In this case, our framework views such a classifier as unfair, because e.g. a classifier that predicts  $0.5$ for everyone will disqualify the Bayes optimal classifier: it maximally improves the imbalance (from $1.0$ to $0.0$) at a minimal cost to accuracy.

\vspace{1em}



\section{Fair risk minimization}
\label{section:erm}

While $\H$ itself doesn't necessarily contain a classifier that is $\gamma$-fair w.r.t $\H$, the class of convex combinations of classifiers from $\H$, denoted $\Delta(\H)$, provably does (see Appendix \ref{appendix:thm4} for the proof). Thus, thinking of $\gamma$-fairness as a hard constraint, a natural objective is to output the most accurate classifier in $\Delta(\H)$ that is $\gamma$-fair w.r.t $\H$. We refer to this as the \emph{fair risk minimization} problem. In practice, however, we will typically be interested in the empirical counterpart of this problem (or the \emph{fair ERM}), where all the quantities are computed w.r.t a finite sample. To accommodate the transition from working with the underlying distribution $\P$ to working with an i.i.d. sample $D\sim\P^{m}$, we define a natural notion of \emph{approximate} disqualification, and prove that the resulting approximate fairness definition implied by it generalizes from a sample to the underlying distribution. Intuitively, a classifier $h'$ $(\alpha, \gamma)$-disqualifies $h$ if the requirement in Equation (\ref{eqn:intro:disq}) holds, even when we add an additive slack term $\alpha$ to both the difference in imbalance and the difference in loss. 

\vspace{1em}

In this section we prove that while $\gamma$-fairness and Pareto efficiency are distinct (as solution concepts), the latter is sufficiently informative for the purposes of fair risk minimization. Specifically, we show that given oracle access to an approximation of the Pareto frontier of $\H$ w.r.t accuracy and imbalance (in a \emph{fixed} direction, say $T \to S$) we can solve the fair ERM problem.

\begin{result}
\label{result:fair-erm}
Fix a class $\H$, tradeoff parameter $\gamma$, approximation parameter $\alpha$ and a dataset $D$. There exists an algorithm that, for 
 every $\varepsilon \leq \alpha$,  produces a classifier $h$ that is (i) $(\alpha,\gamma)$-fair w.r.t $\H$ on $D$, (ii) at least as accurate (on $D$) as the optimal $(\alpha-\varepsilon,\gamma)$-fair classifier in $\H$. Furthermore, the algorithm runs in time $\poly(1/\varepsilon)$ and makes $\poly(1/\varepsilon)$ queries to ${\sf{PF}}^{\Delta(\H)}$, where  ${\sf{PF}}^{\Delta(\H)}(\tau)$
 returns a classifier from $\Delta(\H)$ whose accuracy is optimal given that its imbalance level doesn't exceed $\tau \in [-1,1]$.
\end{result}

The algorithm itself is simple: it iterates through the classifiers in the Pareto frontier of $\H$ to find the most accurate one that is not empirically (approximately) disqualified by another classifier on the Pareto frontier. Using the property of Pareto efficiency, this suffices for the desired guarantee; See Appendix \ref{appendix:thm4} for the full proof.

\vspace{1em}

 Theorem \ref{result:fair-erm} demonstrates that the fair ERM problem is no harder than the problem of approximately computing the Pareto frontier of $\H$. In some simple cases, the later  can be solved efficiently. For example, when $\H$ is the class of linear classifiers with bounded norm over $\R^d$, the overall running time of $A$ will be $\poly(1/\varepsilon, d)$ (since every query to ${\sf{PF}}(\tau;\H)$ can be obtained as the solution to a convex program with $d$ variables). While this isn't true for general classes, estimating the entire Pareto frontier is a well studied task with a variety of existing algorithms and heuristics  \cite{fliege2000steepest}.

\emph{Remark.} We note that the algorithm in Theorem \ref{result:fair-erm} only approximately solves the fair ERM problem, in the sense that it produces a $(\alpha,\gamma)$-fair classifier whose sample-accuracy is only competitive with the sample-accuracy of the best $(\alpha-\varepsilon,\gamma)$-fair classifier. Without assuming anything about $\H$, this gap could be substantial. This highlights that the approximation guarantee is closely related to the Lipschitzness of the Pareto frontier of $\H$; see Appendix \ref{appendix:thm4} for an additional discussion.

\section{Discussion}
\label{section:discuss}

\paragraph{Applications.} Our framework can be used to select from (and compare between) different learning strategies, with and without an external quantification of the appropriate trade-off parameter $\gamma$. For a concrete example, suppose a data-scientist fits a standard learning algorithm to their data, say the Adult Income dataset, yielding a predictor $h_{ERM}$ whose squared error on the test set is, say, 0.149. Mindful of fairness, they also evaluate the difference in the true positive rate between men and women, finding it to be significant: under $h_{ERM}$,  men with a positive label receive scores that are on average 0.11 higher than women with positive labels. In response, they consider a different  strategy, such as a adding an explicit fairness regularization penalty to the ERM objective. This yields a second classifier 
$h_{fairERM}$, whose squared error on the test set is now 0.15 but whose imbalance has dropped to 0.051. What should guide the comparison (and eventual choice) between $h_{ERM}$ and $h_{fairERM}$? Our framework provides a simple and clear criteria: we should pick the classifier that improves fairness only if for us, fairness is $20\times$ as important as accuracy is.\footnote{By definition, $h_{fairERM}$ $\gamma$-disqualifies $h_{ERM}$ if the normalized improvement in fairness exceeds the loss to accuracy: $0.11-0.051 > f_\gamma(0.15-0.149)$. The Adult Income dataset is imbalanced, and so in this case $\eta=\Pr[\mathtt{is-woman}\land y=1]\approx 0.03$.  The scaling function $f$ from Theorem 1 therefore ``re-scales'' the loss difference as $f_\gamma(0.001) = \sqrt{2\gamma \cdot 0.001/0.03}$. Solving for $\gamma$, we obtain that the  $h_{fairERM}$ $\gamma$-disqualifies $h_{ERM}$ only when $\gamma < 0.05$.} An important aspect of our framework is that  the value of $\gamma$ can be elicited once from an external party such as a regulator, and this can be applied by data scientists to select classifiers for a variety of different tasks (with the appropriate scaling functions). We also note that the same principles can be used even in the absence of an external quantification of $\gamma$. Fixing a benchmark class $\H$ and a learning strategy that results in a classifier $h$, we can solve for the minimal value  $\hat{\gamma}$ for which there exists $h' \in \H$ that $\hat{\gamma}-$disqualifies $h$. The resulting value $\hat{\gamma}$ can be thought of as the ``effective unfairness'' of the algorithm w.r.t the benchmark, and can serve as additional metric, that is both simple and interpretable, in comparing different learning algorithms.

\paragraph{Disqualification in the ``multi-group`` setup.} While violations of ``group fairness'' notions (such as an imbalance between $S$ and $T$) can serve as meaningful red flags, focusing on aggregate behavior over the entire group can open the door to abuses targeting sub-populations. Individual fairness, pioneered by \cite{dwork2012fairness}, provides strong fairness protections but requires a task-specific similarity metric, which can be challenging to obtain. A more recent line of work, starting with \cite{hebert2017calibration, kearns2018preventing}, strengthens group fairness by requiring its guarantee to hold for every group in a large collections $\cal{C}$ of overlapping sub-populations. Such ``multi-group'' fairness notions can be instantiated with different fairness measures: Multi-calibration \cite{hebert2017calibration} requires that the risk scores be calibrated on every group in the collection whereas Kearns {\em et al.} \cite{kearns2018preventing} suggest imposing demographic parity between groups in the collection. We can thus similarly extend our framework to the multi-group setup by requiring that for 
for every pair of groups $S,T \in {\cal C}$ (where ${\cal C}$ is a large collection of overlapping sets), no classifier $\gamma$-disqualifies the given classifier where the imbalance measures the predictions of $S$ vs $T$. The tension between fairness and accuracy is an important issue in the study of multi-group fairness, as it directly impacts our ability to strengthen the fairness guarantee via enriching the collection $\C$. For fairness notions that are not in tension with accuracy, such as calibration, there is no inherent accuracy downside to enriching ${\cal C}$ (note, however, that guaranteeing fairness for richer collections might require higher running time or sample complexity). In particular, the Bayes-optimal predictor satisfies multi-calibration for any collection of sets \cite{hebert2017calibration}.  On the other hand, for notions like multi-demographic parity, enriching the collection of sets might make informative predictions impossible: e.g. if the collection includes many sets with different base rates. Thus, whereas it would be quite desirable to obtain multi-calibration with respect to entire computation classes (e.g. all the sufficiently-large sets that can be efficiently identified from the data), for multi-demographic parity or multi-balance \cite{kearns2018preventing} this might make very little sense: the collection of sets should be carefully tailored to the problem at hand, and adding additional sets might ``over-constrain'' the problem, forcing the predictions to be uninformative. In this respect, multi-fairness with $\gamma=1$ can be considered as an alternative to the multi-balance notion from \cite{kearns2018preventing} (which is equivalent to multi-fairness with $\gamma=0$): the fairness guarantee is more relaxed, but the collection of sets can be enriched almost arbitrarily without making the definition overly restrictive (a corollary of Theorem \ref{result:1-fair}).

\paragraph{Additional related work.}
The notion of Equal Opportunity was proposed in the seminal work of \cite{hardt2016equality}, and the related notion of balance, which we focus on here, was proposed in \cite{kleinberg2016inherent}. The impossibilities results for simultaneously obtaining calibration (which can be viewed as a minimal accuracy requirement) and error rate parity were established in \cite{kleinberg2016inherent, chouldechova2017fair}. They were strengthened by \cite{pleiss2017fairness} who show that even a single parity constraint (e.g. equalizing false positive rates) can’t be meaningfully obtained together with calibration. The fairness community has taken several approaches in light of these basic trade-offs.  One approach seeks to understand the conditions under which fairness is desirable despite (or in spite) of its cost to accuracy, such as when the data itself is discriminatory \cite{blum2019recovering} or when the fairness dynamics require it \cite{jung2020fair}. Different works attempt to quantify fairness in a way that stands in less stark opposition to accuracy to begin with \cite{hebert2017calibration, kim2019multiaccuracy, blum2019advancing, kallus2019fairness, rothblum2021multi}, or to alleviate the trade-offs by gathering more informative data \cite{chen2018my, garg2019tracking}.

\bibliographystyle{apalike}
\bibliography{refs}

\newpage

\appendix

\section{Additional notation}
\label{appendix:prelims}

We use $\X$ to denote the space of covariates,  $A$ a binary group membership indicator (defining two groups, which we denote by $T$ and $S$), and $\Y = \set{0,1}$. 
We assume an underlying (but unknown) distribution $\P$ on $\X \times A \times \Y$. W will sometimes use $D_g$ to denote $\P$ restricted to samples from group $g \in \set{S,T}$.  Additionally,  $\mu_g$ denotes the fraction of individuals that are in $g$ (with $\mu_S = 1-\mu_T$) and $\beta_g$ denotes the base rate for $g$: $\beta_g \equiv \Pr_{x, y \sim D_g}[y=1]$.

A binary classifier is a mapping $h: \X \times A \to \Y$. A hypothesis class $\H$ is a collection of such binary classifiers. Since we will generally work with convex combinations of binary classifiers, we will consider randomized classifiers  $h: \X \times A \to \hat{\Y}$, where $\hat{\Y} = [0,1]$ and  $h(x, g)$ is the probability that $h$ labels an individual from $g$ with covariates $x$ as positive. 

A loss function $\ell$ is mapping from $Y \times \hat{Y} \to \R$. Slightly abusing notation, we use $\ell_\P(h) = \E_{x,g,y \sim \P}[\ell(h(x,g),y)]$ to denote the loss of a randomized classifier $h$ w.r.t the distribution  $\P$, and we will  drop the subscript $\P$ when it is clear from context. Given a loss $\ell$, we denote by $h^\star_\ell$ the Bayes optimal classifier w.r.t $\ell$: $    h^\star_\ell \in \arg\min_{h: \X \times A \to \hat{\Y}} \ell(h)$.

In this work we will focus on two popular choices: the expected zero-one loss (the probability that $h$ assigns the correct binary label) and the squared loss:

\begin{gather*}
    \ell^{0-1}(\hat{y}, y) = \hat{y} \cdot(1-y)+(1-\hat{y})\cdot y \\
    \ell^2(\hat{y}, y) = (\hat{y}-y)^2
\end{gather*}

The Bayes optimal classifiers for these loss functions are:
\begin{gather*}
 h^\star_{\ell^2}(x, g) = \E_{x,g,y\sim \P}\sbr{y \vert x, g},  \\ 
h^\star_{\ell^{0-1}}(x,g) = \textbf{1} \sbr{\E_{x,g,y\sim \P}\sbr{y \vert x, g} \geq \tfrac{1}{2}} = \textbf{1} \sbr{h^\star_{\ell^2}(x, g) \geq \tfrac{1}{2}}
\end{gather*}

\section{Loss imbalance generalizes existing notions}
\label{appendix:loss-imbalance}

Our starting point is a definition of \emph{group fairness} that seeks to equalize some quantity $\psi(h,C)$ across groups $S$ and $T$. Assuming higher values of $\psi(h,C)$ are better, we can then define the (directed) degree of unfairness as

\begin{equation}
\label{eqn:general-unfairness}
    \text{dUnfairness}(h; \psi, C\to C') = \psi(h,C) - \psi(h,C')
\end{equation}

Such that $\text{dUnfairness}(h; \psi, T \to S)$ measures the extent to which $T$ receives favourable outcomes (in the sense implied by $\psi$). For example,

\begin{itemize}
    \item $\psi^{DP}(h,C) = \E_{x,g,y \sim \P}[h(x) \, \vert\, g=C]$ recovers the definition of Demographic Parity, 
    \item  $\psi^{PosBalance}(h,g) = \E_{x,g,y \sim \P}[h(x)\,  \vert \, y=1, g=C]$ recovers the definition of balance for the positive class \cite{kleinberg2016inherent}, and, when $h$ is binary, also the definition of Equal Opportunity \cite{hardt2016equality} 
\end{itemize}

\emph{Remark.} While this is not the focus of our work, we can generalize this formulation to situations in which by fairness we mean equalizing multiple quantities. For example, in Equal Odds \cite{hardt2016equality}, both $\psi_1(h,g)=\FPR(h,g)$ and $\psi_2(h,g)=\TPR(h,g)$ should be equalized across groups; in this case, the directed unfairness can be defined more generally as
\begin{equation}
    \text{dUnfairness}(h; \psi, C\to C') = \max\set{\psi_1(h,C') - \psi_1(h,C), \, \, \psi_2(h,C) - \psi_2(h,C')}
\end{equation}

\subsection{Imbalance and loss imbalance}
\label{sec:loss_imb}

In this work, our focus will be on taking fairness to mean balance for the positive class. We refer to the directed unfairness  $\text{dUnfairness}(h; \psi^{PosBalance}, T \to S)$ more simply as the directed imbalance, denoted $dImb(h; T\to S)$. In fact, we will be working with a slightly more  general notion of imbalance, which we refer to as loss imbalance. Instead of directly comparing the expected scores of the positive members of $T$ with expected scores of the positives members of $S$,  loss imbalance compares the difference in their expected \emph{losses}. 
 
 \begin{definition}[Loss Imbalance]
 \label{def:loss_imb}
 Given a loss $\ell: \hat{\Y} \times \Y \to \R$, the loss imbalance of $h$ in the direction $T \to S$ is defined as
 \begin{equation}
\label{eqn:imbalance}
    \text{dLossImb}(h; \, T \to S, \ell) = \psi^{PosLossBalance}(h,S) - \psi^{PosLossBalance}(h,T)
\end{equation}

where $\psi^{PosLossBalance}(h,C) = \E_{x,  g, y \sim \P}[\ell(h(x), y)  \, \vert \, y=1, g = C]$.
 \end{definition}
 
 Note the change in order -- subtracting $T$ from $S$, as opposed to $S$ from $T$ in Equation (\ref{eqn:general-unfairness}). This is because for loss $\ell$, lower is better.

Loss imbalance is a generalization of the notion of imbalance. The next lemma shows that imbalance is simply the loss imbalance as measured w.r.t the expected zero one loss.

\begin{lemma}
\label{lemma:0-1-loss-imb}
For any $h: \X \to \hat{\Y}$, $dLossImb(h; \ell^{0-1}) = dImb(h)$.
\end{lemma}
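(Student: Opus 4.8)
The claim is that the directed loss imbalance with respect to the expected zero-one loss coincides with the directed imbalance $dImb(h) = \psi^{PosBalance}(h,S) - \psi^{PosBalance}(h,T)$. The plan is to unfold both definitions and reduce the identity to a pointwise statement about the per-individual quantities. First I would write out $dLossImb(h;\ell^{0-1})$ using Definition \ref{def:loss_imb}, which conditions on $y=1$ and on group membership; since we are conditioning on $y=1$, the only relevant value of the loss is $\ell^{0-1}(\hat y, 1)$. Plugging the label $y=1$ into the definition $\ell^{0-1}(\hat y, y) = \hat y(1-y) + (1-\hat y)y$ gives $\ell^{0-1}(\hat y, 1) = 1-\hat y$.

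The second step is to substitute this into $\psi^{PosLossBalance}$. For any group $C \in \{S,T\}$ we get
\[
\psi^{PosLossBalance}(h,C) = \E_{x,g,y\sim\P}\bigl[\,1 - h(x) \,\bigm|\, y=1,\, g=C\,\bigr] = 1 - \E_{x,g,y\sim\P}\bigl[\,h(x)\,\bigm|\, y=1,\, g=C\,\bigr] = 1 - \psi^{PosBalance}(h,C),
\]
using linearity of conditional expectation. The third step is to take the difference across groups in the direction $T\to S$: by Definition \ref{def:loss_imb},
\[
dLossImb(h;\ell^{0-1}) = \psi^{PosLossBalance}(h,S) - \psi^{PosLossBalance}(h,T) = \bigl(1 - \psi^{PosBalance}(h,S)\bigr) - \bigl(1 - \psi^{PosBalance}(h,T)\bigr),
\]
and the constant $1$ cancels, leaving $\psi^{PosBalance}(h,S) - \psi^{PosBalance}(h,T)$, which is exactly $dImb(h; T\to S) = dImb(h)$ by the definition of directed imbalance as $\text{dUnfairness}(h;\psi^{PosBalance}, T\to S)$ from Equation (\ref{eqn:general-unfairness}).

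There is no real obstacle here; the only thing to be a little careful about is the sign/direction convention — the remark after Definition \ref{def:loss_imb} flags that loss imbalance subtracts $T$ from $S$ (since lower loss is better) whereas the general unfairness in Equation (\ref{eqn:general-unfairness}) subtracts $S$ from $T$ (since higher $\psi$ is better). The point is that applying $\ell^{0-1}(\cdot,1) = 1 - (\cdot)$ flips the ``higher is better'' orientation of $\psi^{PosBalance}$ into the ``lower is better'' orientation expected of a loss, and the two sign conventions are set up precisely so that this flip makes the two directed quantities agree rather than differ by a sign. So the whole argument is a one-line substitution once the conventions are lined up, and I would present it as such.
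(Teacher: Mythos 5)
Your argument is correct in substance and is exactly the paper's proof: plug $y=1$ into $\ell^{0-1}$ to get $1-h(x)$, use linearity of conditional expectation, and let the minus sign flip the ``higher is better'' orientation of $\psi^{PosBalance}$ into the ``lower is better'' orientation of a loss. The only flaw is a sign transcription slip in your third step: $\bigl(1-\psi^{PosBalance}(h,S)\bigr)-\bigl(1-\psi^{PosBalance}(h,T)\bigr)$ simplifies to $\psi^{PosBalance}(h,T)-\psi^{PosBalance}(h,S)$, which is what Equation (\ref{eqn:general-unfairness}) defines as $\text{dUnfairness}(h;\psi^{PosBalance},T\to S)=dImb(h;T\to S)$ --- not to $\psi^{PosBalance}(h,S)-\psi^{PosBalance}(h,T)$ as written; your closing paragraph about the two sign conventions cancelling is the correct account, so this is a typo rather than a gap, but fix the displayed line so the intermediate expression and the quoted definition actually match.
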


\begin{proof}
Recall that the 0-1 loss is defined as $\ell^{0-1}(h(x),y)=h(x)\cdot(1-y)+(1-h(x))\cdot y$, so $\ell^{0-1}(h(x),1)=1-h(x)$. We therefore have

\begin{align*}
    LossImb(h;\ell^{0-1})	&= \E_{x,  g, y \sim \P}[\ell(h(x), y)  \, \vert \, y=1, g = S] - \E_{x,  g, y \sim \P}[\ell(h(x), y) \, \vert \, y=1, g = T] \\
    &= \E_{x,  g, y \sim \P}[\ell(h(x), 1)  \, \vert \, y=1, g = S] - \E_{x,  g, y \sim \P}[\ell(h(x), 1) \, \vert \, y=1,  g = T] \\
	&=\E_{x,  g, y \sim \P}[1 - h(x) \, \vert \,y=1, g = S] - \E_{x,  g, y \sim \P}[1- h(x)\, \vert \,  y=1, g = T] \\
	&= \E_{x,  g, y \sim \P}[h(x)  \, \vert \, y=1, g = T] - \E_{x,  g, y \sim \P}[h(x) \, \vert \, y=1, g = S] \\
	&=Imb(h)
\end{align*}

\end{proof}

\section{Proof of Theorem \ref{result:1-fair}}
\label{appendix:thm1}

\subsection{Warmup: A simple case}

For simplicity, let $\ell$ denote the squared loss. We want to show a choice of $f=f_1$ for which no classifier can disqualify $\ensuremath{h_{\ell^{2}}^{\star}}$:

\begin{equation*}
    \forall h': \X \to [0,1]: \quad Imb(\ensuremath{h_{\ell^{2}}^{\star}}) - Imb(h') \leq f(\min \set{0, \,\, \ell(h') - \ell(\ensuremath{h_{\ell^{2}}^{\star}})})
\end{equation*}

Since $\ensuremath{h_{\ell^{2}}^{\star}}$ is optimal, this is  the same as 

\begin{equation*}
    \forall h': \X \to [0,1]: \quad Imb(\ensuremath{h_{\ell^{2}}^{\star}}) - Imb(h') \leq f( \ell(h') - \ell(\ensuremath{h_{\ell^{2}}^{\star}}))
\end{equation*}

We will show this can be done in the simple case 
in which $\X = \emptyset$, and $\ensuremath{h_{\ell^{2}}^{\star}}$ predicts $\beta_T$ for $T$ and $\beta_S$ for $S$, and has an imbalance of $\beta_T - \beta_S$. There are two ways for a classifier to improve the imbalance: increase the prediction for $S$ or decrease the prediction for $T$. Consider the first case, so $h'$ is identical to $\ensuremath{h_{\ell^{2}}^{\star}}$, 
 but  adds $\varepsilon$ to the prediction of $S$. By definition, the imbalance improves by $\varepsilon$. The increase in loss, on the other-hand,  is proportional to $\varepsilon^2$:

\begin{align*}
    \ell(h')-\ell(\ensuremath{h_{\ell^{2}}^{\star}})	&=d(h',\ensuremath{h_{\ell^{2}}^{\star}})-d(\ensuremath{h_{\ell^{2}}^{\star}},\ensuremath{h_{\ell^{2}}^{\star}}) \\
	&=d(h',\ensuremath{h_{\ell^{2}}^{\star}}) \\
	&=\mu_{S}\cdot\varepsilon^{2}
\end{align*}

Where $d(p,p') = \E_{x}[(p(x)-p'(x))^2]$. This uses the fact that for every classifier $h$, the squared loss is related to the Euclidean distance from $\ensuremath{h_{\ell^{2}}^{\star}}$, as follows: $\ell(h) = d(h,\ensuremath{h_{\ell^{2}}^{\star}}) + t(\ensuremath{h_{\ell^{2}}^{\star}})$; Indeed:

\begin{align*}
    \ell(h)&=\E_{x,y}[(y-h(x))^{2}] \\
    &= \E_{x}[\ensuremath{h_{\ell^{2}}^{\star}}(x)\cdot(1-h(x))^{2}+(1-\ensuremath{h_{\ell^{2}}^{\star}}(x))\cdot h(x)^{2}] \\
    &= \E_{x}[\ensuremath{h_{\ell^{2}}^{\star}}(x)-2\ensuremath{h_{\ell^{2}}^{\star}}(x)h(x)+h(x)^{2}] \\
    &= \E_{x}\ensuremath{h_{\ell^{2}}^{\star}}(x)-\ensuremath{h_{\ell^{2}}^{\star}}(x)^{2}]+\E_{x}[(\ensuremath{h_{\ell^{2}}^{\star}}(x)-h(x))^{2}] \\
    &=d(h,\ensuremath{h_{\ell^{2}}^{\star}}) +  t(\ensuremath{h_{\ell^{2}}^{\star}})
\end{align*}

We can now see that $h'$ doesn't $1-$disqualify $\ensuremath{h_{\ell^{2}}^{\star}}$:


\begin{equation*}
    f_1(\ell(h')-\ell( \ensuremath{h_{\ell^{2}}^{\star}})) = t_1(1) \cdot t_2(\mu_{S}\cdot\varepsilon^{2}) = t_2(\mu_{S}\cdot\varepsilon^{2}) = \sqrt{\frac{\mu_S \cdot \varepsilon^2}{\mu \cdot \beta}} \geq \varepsilon = Imb( \ensuremath{h_{\ell^{2}}^{\star}})-Imb(h')
\end{equation*}

as required.

\subsection{Full proof of Theorem \ref{result:1-fair}}

\begin{proof}
For simplicity, denote $h^\star \triangleq \ensuremath{h_{\ell^{2}}^{\star}}$. Assume $h^\star$ is not perfectly balanced (otherwise we are done) and w.l.o.g that the imbalance is in favor of $T$, so when we write Imb we mean the imabalance in the direction $T\to S$. 

We want to prove that for the specified scaling function, no other classifier $h$ $1-$disqualifies $h^\star$:

\begin{equation}
\label{eqn: h_star_is_fair}
     f_1(\ell(h) - \ell(h^\star)) \geq \text{dImb}(h^\star) - \text{dImb}(h)
\end{equation}

For simplicity, we will sometimes write $f_1 = f$. Note that we can  indeed write $f(\ell(h) - \ell(h^\star))$ (as opposed to $f(\max\set{0,\ell(h) - \ell(h^\star)})$, as the definition states) because from the optimality of $h^\star$ w.r.t $\ell=\ell^2$ we have that $\ell(h) - \ell(h^\star) \geq 0$.

Let $h$ be any classifier; define the following quantities for $x\in \X, g\in \set{S,T}$:

\begin{align*}
    \Delta_{x,g} &\equiv h^\star(x,g) - h(x,g) \\
    m_{x,g} &\equiv \Pr_{D_g}(X=x) \\
    m^y_{x,g} &\equiv \Pr_{D_g}(X=x \vert Y=y) 
\end{align*}

Note that for a group $g$ and $y \in \set{0,1}$, both define legal probability measures (they are non-negative, and sum to 1: $\sum_{x\in \X}m_{x,g}= \sum_{x \in \X}m_{x,g}^y=1$). We can now express both the difference in loss and the difference in imbalance between $h^\star$ and $h$ in terms of $m, m^1$ and $\Delta$, as follows:

\begin{gather}
    Imb(h^{\star})-Imb(h) = \sum_{x}m_{x,T}^{1}\cdot\Delta(x,T)-\sum_{x}m_{x,S}^{1}\cdot\Delta(x,S) \label{eqn:imb-diff} \\
    \ell(h)-\ell(h^{\star})=\mu_{T}\cdot\sum_{x}m_{x,T}\cdot\Delta(x,T)^{2}+\mu_{S}\cdot\sum_{x}m_{x,S}\cdot\Delta(x,S)^{2} \label{eqn:loss-diff}
\end{gather}

For (\ref{eqn:imb-diff}), we first note that for any classifier $\tilde{h}$, 
\begin{equation*}
    Imb(\tilde{h})	=\E[\tilde{h}(x,T)\vert y=1]-\E[\tilde{h}(x,S)\vert y=1]
	=\sum_{x}m_{x,T}^{1}\cdot h(x,T)-\sum_{x}m_{x,S}^{1}\cdot h(x,S)
\end{equation*}

So 
\begin{align*}
    Imb(h^{\star})-Imb(h)	&=\left(\sum_{x}m_{x,T}^{1}\cdot h_{1}(x,T)-\sum_{x}m_{x,S}^{1}\cdot h_{1}(x,S)\right)-\left(\sum_{x}m_{x,T}^{1}\cdot h(x,T)-\sum_{x}m_{x,S}^{1}\cdot h(x,S)\right) \\
	&=\sum_{x}m_{x,T}^{1}\cdot\Delta(x,T)-\sum_{x}m_{x,S}^{1}\cdot\Delta(x,S)
\end{align*}

For (\ref{eqn:loss-diff}), note that the loss of a classifier $\tilde{h}$ restricted to a group $g$ can be written as

\begin{align*}
    \ell_{g}(\tilde{h})	&=\sum_{x}m_{x,g}\cdot\left(h^{\star}(x,g)\cdot(\tilde{h}(x,g)-1)^{2}+(1-h^{\star}(x,g)\cdot\tilde{h}(x,g)^{2}\right) \\
	&=\sum_{x}m_{x,g}\cdot\left(h^{\star}(x,g)-2h^{\star}(x,g)\tilde{h}(x,g)+\tilde{h}(x,g)^{2}\right)
\end{align*}

By direct calculation, this implies that 
\begin{equation*}
    \ell_{g}(h)-\ell_{g}(h^{\star})=\sum_{x}m_{x,g}\cdot(h(x,g)-h^{\star}(x,g))^{2}=\sum_{x}m_{x,g}\cdot\Delta(x,g)
\end{equation*}

From which (\ref{eqn:loss-diff}) follows, since $\ell(h)-\ell(h^{\star})=\mu_{T}\cdot\left[\ell_{T}(h)-\ell_{T}(h^{\star})\right]+\mu_{S}\cdot\left[\ell_{S}(h)-\ell_{S}(h^{\star})\right]$.

We can now obtain the required:

\begin{align}
    f_{1}\left[\ell(h)-\ell(h^\star)\right]	&=\sqrt{\frac{2\left[\mu_{T}\cdot\sum_{x}m_{x,T}\cdot\Delta(x,T)^{2}+\mu_{S}\cdot\sum_{x}m_{x,S}\cdot\Delta(x,S)^{2}\right]}{\mu\beta}} \label{thm-h-star-t1}\\
	&\geq\frac{\sqrt{\mu_{T}\cdot\sum_{x}m_{x,T}\cdot\Delta(x,T)^{2}}+\sqrt{\mu_{S}\cdot\sum_{x}m_{x,S}\cdot\Delta(x,S)^{2}}}{\sqrt{\mu\beta} } \label{thm-h-star-t2} \\
	&\geq\frac{\sqrt{\sum_{x}m_{x,T}\cdot\Delta(x,T)^{2}}+\sqrt{\sum_{x}m_{x,S}\cdot\Delta(x,S)^{2}}}{\sqrt{\beta}} \label{thm-h-star-t3}
 \\	&\geq\frac{\sqrt{\beta_{T}\cdot\sum_{x}m_{x,T}^{1}\cdot\Delta(x,T)^{2}}+\sqrt{\beta_{S}\sum_{x}m_{x,S}^{1}\cdot\Delta(x,S)^{2}}}{\sqrt{\beta}}  \label{thm-h-star-t4} \\
	&\geq\sqrt{\sum_{x}m_{x,T}^{1}\cdot\Delta(x,T)^{2}}+\sqrt{\sum_{x}m_{x,S}^{1}\cdot\Delta(x,S)^{2}} \label{thm-h-star-t5} \\
	&\geq\sum_{x}m_{x,T}^{1}\cdot\Delta(x,T)-\sum_{x}m_{x,S}^{1}\cdot\Delta(x,S) \label{thm-h-star-t6} \\
	&=Imb(h^{\star})-Imb(h) \label{thm-h-star-t7}
\end{align}

where:  (\ref{thm-h-star-t1}) follows directly by applying $f_1$ to the expression we derived in Equation (\ref{eqn:loss-diff}) for the loss difference; the transition in (\ref{thm-h-star-t2}) follows from the fact that $\sqrt{2(a+b)} \geq \sqrt{a} + \sqrt{b}$ for $a,b \geq 0$; the transition in (\ref{thm-h-star-t3}) follows by the definition of $\mu$ as $\min\set{\mu_T, \mu_S}$; the transition in (\ref{thm-h-star-t4}) uses the fact that $m_{x,g} \geq m_{x,g}^{1}\cdot\beta_{g}$, which follows from the law of total probability:

\begin{align*}
    m_{x,g} &= \Pr_{D_g}(X=x) \\
    &= \sum_{y}\sbr{\Pr_{D_g}\br{X=x, Y=y}} \\
    &= \sum_{y}\sbr{\Pr_{D_g}\br{X=x \vert Y=y} \cdot \Pr_{D_g}(Y=y)} \\
    &\geq \Pr_{D_g}\br{X=x \vert Y=1} \cdot \Pr_{D_g}(Y=1) \\
    &= m^1_{x,g} \cdot \beta_g
\end{align*}

The transition in (\ref{thm-h-star-t5}) follows by the definition of $\beta$ as $\min\set{\beta_T, \beta_S}$, and the transition in (\ref{thm-h-star-t6}) is an application of Jensen's inequality (in the finite sum version), which states that $\varphi\left(\frac{\sum a_{i}x_{i}}{\sum a_{i}}\right)\geq\frac{\sum a_{i}\varphi(x_{i})}{\sum a_{i}}$ for concave $\varphi$. Indeed, using the fact that $\sum_{x}m_{x,g}^{1} = 1$,

\begin{equation*}
    \sqrt{\sum_{x}m_{x,g}^{1}\cdot\Delta(x,g)^{2}}=\sqrt{\frac{\sum_{x}m_{x,g}^{1}\cdot\Delta(x,g)^{2}}{\sum_{x}m_{x,g}^{1}}}\geq\frac{\sum_{x}m_{x,g}^{1}\cdot\sqrt{\Delta(x,g)^{2}}}{\sum_{x}m_{x,g}^{1}}=\frac{\sum_{x}m_{x,g}^{1}\cdot\left|\Delta(x,g)\right|}{\sum_{x}m_{x,g}^{1}} = \sum_{x}m_{x,g}^{1}\cdot\left|\Delta(x,g)\right|
\end{equation*}

Finally, the transition in (\ref{thm-h-star-t7}) uses the expression we derived for the imbalance difference in Equation (\ref{eqn:imb-diff}).

\end{proof}

\section{Proof of Theorem \ref{result:gamma-fairness}}
\label{appendix:thm2}

The proof of the first part of Theorem \ref{result:gamma-fairness} will be based on the following lemma, which we state and prove first.

\begin{lemma}
\label{lemma:h_a_h_b}
Fix any $0 \leq a \leq b \leq 1$. If $\gamma \geq b$, then $h_a = a \cdot h_1 + (1-a) \cdot h_0$ doesn't $\gamma-$disqualify $h_b = b \cdot h_1 + (1-b) \cdot h_0$.
\end{lemma}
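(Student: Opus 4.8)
The plan is to exploit two structural facts: (a) with $\ell_B$ the expected $0/1$ loss, loss imbalance is \emph{affine} in the classifier, so $h_a$ and $h_b$ interpolate linearly (in imbalance) between $h_0$ and $h_1$; and (b) the squared loss decomposes as $\ell_A(h)=\E_{x,g\sim\P}[(h(x,g)-h_1(x,g))^2]+c$ with $c$ independent of $h$ — precisely the identity established in the proof of Theorem~\ref{result:1-fair}.

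First I would fix conventions: w.l.o.g.\ the imbalance of $h_1$ is non-negative in the direction $T\to S$, and I write $\varepsilon_1\triangleq dImb(h_1;\,T\to S)\ge 0$; since $h_0$ is perfectly balanced, $dImb(h_0)=0$. By affinity, $dImb(h_a)=a\varepsilon_1$ and $dImb(h_b)=b\varepsilon_1$ in the direction $T\to S$, both non-negative, so the direction in which the disqualification test against $h_b$ evaluates imbalance is $T\to S$, and the left-hand side of (\ref{eqn:intro:disq}) equals $dImb(h_b)-dImb(h_a)=(b-a)\varepsilon_1$. (If $\varepsilon_1=0$ everything is balanced and nothing disqualifies anything, so assume $\varepsilon_1>0$.)

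Next I would compute the accuracy gap. Since $h_1-h_a=(1-a)(h_1-h_0)$ (and likewise with $b$), identity (b) gives $\ell_A(h_a)-\ell_A(h_b)=[(1-a)^2-(1-b)^2]\,D$, where $D\triangleq\E_{x,g\sim\P}[(h_1(x,g)-h_0(x,g))^2]\ge 0$ and $(1-a)^2-(1-b)^2=(b-a)(2-a-b)\ge 0$ because $a\le b$. Plugging this into the scaling function of (\ref{eqn:scaling-l2}), the right-hand side of the disqualification test is $f_\gamma(\ell_A(h_a)-\ell_A(h_b))=\sqrt{\gamma\cdot 2(b-a)(2-a-b)D/\eta}$. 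So I must show $(b-a)\varepsilon_1\le\sqrt{\gamma\cdot 2(b-a)(2-a-b)D/\eta}$; for $a=b$ both sides vanish, and for $a<b$ I can square and divide by $b-a$, reducing the goal to $(b-a)\varepsilon_1^2\le 2\gamma(2-a-b)D/\eta$.

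The final step combines an elementary inequality with Theorem~\ref{result:1-fair}. Algebraically, $b(2-a-b)-(b-a)=(a+b)(1-b)\ge 0$ for $0\le a\le b\le 1$, so the hypothesis $\gamma\ge b$ yields $\gamma(2-a-b)\ge b(2-a-b)\ge b-a$. For the other ingredient: $h_1$ is $1$-fair (Theorem~\ref{result:1-fair}), hence $h_0$ does not $1$-disqualify it; since $\ell_A(h_0)-\ell_A(h_1)=D$ and $dImb(h_1)-dImb(h_0)=\varepsilon_1$, this exactly says $\varepsilon_1\le f_1(D)=\sqrt{2D/\eta}$, i.e.\ $2D/\eta\ge\varepsilon_1^2$. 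Chaining these, $2\gamma(2-a-b)D/\eta\ge (b-a)\cdot 2D/\eta\ge (b-a)\varepsilon_1^2$, which is what was needed. The only real subtlety is carefully tracking the direction conventions in the disqualification definition and noticing that $\gamma\ge b$ is exactly the hypothesis that powers $\gamma(2-a-b)\ge b-a$; the rest is routine once identities (a) and (b) are in hand.
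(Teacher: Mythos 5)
Your proof is correct and follows essentially the same route as the paper's: linearity of imbalance gives the left-hand side $(b-a)\,dImb(h_1)$, the squared-loss identity gives $\ell_A(h_a)-\ell_A(h_b)=(b-a)(2-a-b)\bigl(\ell_A(h_0)-\ell_A(h_1)\bigr)$, the $1$-fairness of $h_1$ supplies $dImb(h_1)\le f_1\bigl(\ell_A(h_0)-\ell_A(h_1)\bigr)$, and everything reduces to $(b-a)\le\gamma(2-a-b)$. Your derivation of that last inequality via the factorization $b(2-a-b)-(b-a)=(a+b)(1-b)\ge 0$ is a slightly cleaner alternative to the paper's substitution $b=1-\tau$, $a=1-\tau-\varepsilon$, but the argument is otherwise identical.
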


\begin{proof}

We want to prove that $h_a$ doesn't $\gamma-$disqualify $h_b$:

\begin{equation*}
    Imb(h_{b})-Imb(h_{a})\leq f_{\gamma}\left(\min\left\{ 0,\ell(h_{a})-\ell(h_{b})\right\} \right)
\end{equation*}

By definition, $h_a$ is less accurate and more balanced than $h_b$,  so we can re-write the above as  

\begin{equation}
\label{eqn:h_gamma_lemma_required}
    Imb(h_{b})-Imb(h_{a})\leq f_{\gamma} \br{\ell(h_{a})-\ell(h_{b})}
\end{equation}

Let's start with the left-hand-side. Since imbalance is a linear operator, we have that in general, $Imb(h_\alpha) = Imb(\alpha \cdot h_1 + (1-\alpha)\cdot h_0) = \alpha \cdot Imb(h_1) + (1-\alpha) \cdot Imb(h_0) = \alpha \cdot Imb(h_1)$, where the last transition follows from the fact that $h_0$ is, by definition, perfectly balanced. We therefore have that

\begin{equation*}
    Imb(h_{b})-Imb(h_{a}) = (b-a) \cdot Imb(h_1)
\end{equation*}

Next, recall that $h_1$ is $1-$fair, so $h_0$ doesn't $1-$disqualify it:

\begin{equation*}
    Imb(h_{1})-Imb(h_{0})\leq f_1\left(\min\left\{ 0,\ell(h_{0})-\ell(h_{1})\right\} \right)
\end{equation*}

Equivalently, $Imb(h_{1})\leq f_1\left(\ell(h_{0})-\ell(h_{1})\right)$. Putting it together, for the inequality in (\ref{eqn:h_gamma_lemma_required}) to hold, it suffices that

\begin{equation}
\label{eqn:h_gamma_lemma_int}
    (b-a) \cdot f_{1} \br{\ell(h_{0})-\ell(h_{1})} \leq f_{\gamma} \br{\ell(h_{a})-\ell(h_{b})}
\end{equation}

Denote $c = \ell(h_{0})-\ell(h_{1})$. We will be using the following fact:
\begin{equation}
\label{eqn:h_gamma_lemma_aux_fact}
     \ell(h_{a})-\ell(h_{b}) = (b-a)\cdot (2-a-b)\cdot c
\end{equation}

We will first show how, assuming Equation (\ref{eqn:h_gamma_lemma_aux_fact}) is true, we can complete the proof of the lemma.  Using the definition of $f$ as $f_\gamma(a) = \sqrt{\frac{2\gamma a}{\mu \beta}}$,  Equation (\ref{eqn:h_gamma_lemma_int}) is equivalent to

\begin{equation*}
    (b-a) \cdot  \sqrt{\frac{2\cdot c}{\mu \beta}} \leq \sqrt{\frac{2\cdot c \cdot \gamma \cdot (b-a)(2-a-b)}{\mu \beta}}
\end{equation*}

Or $(b-a) \leq  \gamma \cdot (2-a-b)$. It's therefore left to show that $\gamma \geq b$ implies

\begin{equation}
    \gamma \geq \frac{b-a}{2-a-b}
\end{equation}

Denote $b=1-\tau$ and $a=1-\tau-\varepsilon$; where the assumption $0\leq a\leq b\leq1$ implies $\tau \in [0,1]$ and $\varepsilon \in [0, 1-\tau]$. Indeed:

\begin{align*}
    \frac{b-a}{2-a-b}	&=\frac{\varepsilon}{2\tau+\varepsilon} \\
	&=\frac{1}{1+2\tau/\varepsilon} \\
	&\leq\frac{1}{1+2\tau/(1-\tau)} \\
	&=\frac{1-\tau}{1+\tau} \\
	&\leq1-\tau \\ 
	&=b \\
	&\leq\gamma
\end{align*}

To conclude the lemma, we return to the proof of the fact in Equation (\ref{eqn:h_gamma_lemma_aux_fact}):

\begin{equation*}
     \ell(h_{a})-\ell(h_{b}) = (b-a)\cdot (2-a-b)\cdot \ell(h_{0})-\ell(h_{1})
\end{equation*}

Indeed, note that for a classifier $h$, we can write the squared loss as 

\begin{align*}
    \ell(h)&= \E_{x,y}[\ell(h(x),y)] \\
    &= \E_{x}\left[h_{1}(x)\cdot\left(1-h(x)\right)^{2}+(1-h_{1}(x))\cdot h(x)^{2}\right] \\
	&=\E_{x}\left[h_{1}(x)-2h(x)h_{1}(x)+h(x)^{2}\right]
\end{align*}

This means that the difference $\ell(h_a) - \ell(h_b)$ equals

\begin{align}
\label{eqn:l_a-l_b}
    \ell(h_{a})-\ell(h_{b}) &= \E_{x}\left[-2h_a(x)h_{1}(x)+h_a(x)^{2} +2h_b(x)h_{1}(x)-h_b(x)^{2}\right] \\ &= \E_{x}\left[(h_{a}(x)-h_{b}(x))\cdot(h_{a}(x)+h_{b}(x)-2h_{1}(x))\right]
\end{align}

Additionally, we can write $h_a - h_b$ and $h_a + h_b$ in terms of $h_1$ and $h_0$, as follows:

\begin{gather*}
    h_a - h_b = (a-b)\cdot(h_{1}-h_{0}) \\
    h_a + h_b = (a+b)\cdot h_{1}-(a+b-2)\cdot h_{0}
\end{gather*}
So $h_a + h_b -2h_1 = (a+b-2)\cdot(h_{1}-h_{0})$, and plugging this back into the expression in (\ref{eqn:l_a-l_b}), we get:

\begin{equation*}
    \ell(h_a) - \ell(h_b) = (a-b)\cdot(a+b-2)\cdot \E_{x}\left[(h_{1}(x)-h_{0}(x))^{2}\right]
\end{equation*}

In particular, for $a=0, b=1$ we get:

\begin{equation*}
     \ell(h_0) - \ell(h_1) =  \E_{x}\left[(h_{1}(x)-h_{0}(x))^{2}\right]
\end{equation*}

which is exactly what we wanted to show.

\end{proof}

We will now use the lemma to prove Theorem \ref{result:gamma-fairness}.

\begin{proof}
\emph{(i).} Let $h' = \alpha \cdot h_1 + (1-\alpha)\cdot h'_0$ be some classifier in $\tilde{\H}$, where $h'_0$  is some 0-fair classifier and $0\leq \alpha \leq 1$.  We need to prove that $h'$ does not $\gamma-$disqualify $h_\gamma$.  To do so, we argue that (a) $h_\alpha$ doesn't $\gamma$-disqualify $h_\gamma$ and (b) if $h_\alpha$ doesn't $\gamma$-disqualify $h_\gamma$, then neither does $h'$. 

For (a), note that this is an immediate corollary from Lemma \ref{lemma:h_a_h_b}.
First, for  $h'$ to disqualify $h_\gamma$ it must be more balanced, so $\alpha < \gamma$. But then the lemma guarantees that $h_\alpha$ doesn't disqualify $h_\gamma$ (with $h_\alpha$ in the role of $h_a$ and $h_\gamma$ in the role of $h_b$, so clearly the assumption of the lemma is true because $\gamma \geq b$ is the same as $\gamma \geq \gamma$).

For (b), we will be using the fact that $Imb(h') = Imb(h_\alpha)$ (which follows directly from the definition), our claim in (a), and the fact that $\ell(h_\alpha) \leq \ell(h')$ (which we will prove promptly), in order. This yields:

\begin{equation*}
    Imb(h_\gamma) - Imb(h') = Imb(h_\gamma) - Imb(h_\alpha) \leq f_\gamma(\ell(h_\alpha) - \ell(h_\gamma)) \leq f_\gamma(\ell(h') - \ell(h_\gamma))
\end{equation*}

which means that $h'$ doesn't $\gamma$-disqualify $h_\gamma$, which is what we wanted to show.

It is left to prove the fact that $\ell(h_\alpha) \leq \ell(h')$, or $\ell(\alpha \cdot h_1 + (1-\alpha) \cdot h'_0) \leq \ell(\alpha \cdot h_1 + (1-\alpha) \cdot h_0)$.

First, note that $h'-h_{\alpha} = (1-\alpha)\cdot(h'_{0}-h_{0})$ and $h'+h_{\alpha} = 2\alpha\cdot h_{1}+(1-\alpha)\cdot(h'_{0}+h_{0})$. The latter means that $h'+h_{\alpha}-2h_{1} = (1-\alpha)\cdot(h'_{0}+h_{0}-2h_{1})$. Now,

\begin{align*}
    \ell(h')-\ell(h{}_{\alpha})	&=\E[h_{1}-2h'h_{1}+h'^{2}-h_{1}+2h_{\alpha}h_{1}-h_{\alpha}^{2}] \\
	&=\E[-2h'_{\alpha}h_{1}+h'^{2}+2h_{\alpha}h_{1}-h_{\alpha}^{2}] \\
	&=\E[2h_{1}(h_{\alpha}-h')+(h'-h_{\alpha})(h'+h)] \\
	&=\E[(h'-h_{\alpha})(h'+h_{\alpha}-2h_{1})] \\
	&= \E[(1-\alpha)\cdot(h'_{0}-h_{0})(1-\alpha)\cdot(h'_{0}+h_{0}-2h_{1})] \\
	&= (1-\alpha)^{2}\cdot\E[(h'_{0}-h_{0})(h'_{0}+h_{0}-2h_{1})] \\
	&= (1-\alpha)^{2}\cdot\E[h'_{0}(x)^{2}-2h_{1}h'_{0}-h_{0}(x)^{2}+2h_{1}h_{0}] \\&= (1-\alpha)^{2}\cdot\E[h'_{0}(x)^{2}-2h_{1}h'_{0}-h_{0}(x)^{2}+2h_{1}h_{0} + h_1^2 - h_1^2] \\
	&=(1-\alpha)^{2}\cdot\left[\E[h'_{0}(x)^{2}-2h_{1}h'_{0}+h_{1}(x)^{2}]-\E[h{}_{0}(x)^{2}-2h_{1}h{}_{0}+h_{1}(x)^{2}\right]\\
	&=(1-\alpha)^{2}\cdot\left[d(h'_{0},h_{1})-d(h_{0},h_{1})\right]\\
	&=(1-\alpha)^{2}\cdot\left[\ell(h'_{0})-\ell(h_{0})\right] \\
	&\geq 0
\end{align*}

The final transition uses the fact that $\alpha \leq 1$ and $\left[\ell(h'_{0})-\ell(h_{0})\right] \geq 0$ (which is true because  $h_0$ was defined to be the most accurate 0-fair predictor).


\vspace{1em}

\emph{(ii).} We turn to prove the second part of Theorem \ref{result:gamma-fairness}. Recall
we want to show that there exists a $0$-fair classifier $h'_0 \neq h_0$, such that  $h'_\gamma = \gamma \cdot h_1 + (1-\gamma) \cdot h'_0 \in \tilde{\H}$  is \emph{not} $(\gamma, \tilde{\H})$-fair.

We will construct such an instance, as follows. Let $S^+, S^-, T^+, T^-$ denote the positive and negative subsets of $S$ and $T$, respectively (Note that these groups may not be explicitly defined in $\X$). Suppose that the features $\X$ are such that only the following subsets can be identified from $\X$: $S_1$ (consisting of all of $S^+$ and half of $S^-$), $T_1$ (consisting of all of $T^+$ and half of $T^-$) and $T^+$. This means the optimal classifier is

\begin{equation*}
    h_1(x) = 
    \begin{cases}
    1, & x\in T^+ \\
    2/3, & x\in S_1 \\
    0, &\text{otherwise}
    \end{cases}
\end{equation*}

Note that $h_1$ has an imbalance of $1/3$. Consider two 0-fair classifiers: 

\begin{equation*}
    g_0(x) = \begin{cases}
    2/3, & x\in T_1 \\
    2/3, & x\in S_1 \\
    0, &\text{otherwise}
    \end{cases}, \qquad g'_0(x) = 0.5
\end{equation*}

And consider $h'_\gamma = \gamma\cdot h_1 + (1-\gamma) \cdot g'_0$ vs $h_\gamma = \gamma\cdot h_1 + (1-\gamma) \cdot g_0$. We want to use the second to $\gamma$-disqualify the first. Towards this, consider the classifier $h_{\gamma-\varepsilon}$ -- a mix of $h_1$ and $g_0$ that puts slightly less weight on $h_1$. Relative to $h'_\gamma$, this classifier \emph{improves} the imbalance by $\varepsilon$. However, we argue that it is also more accurate (for a sufficiently small $\varepsilon$); indeed, as we take $\varepsilon$ to zero, the difference $\ell(h_{\gamma - \varepsilon} - \ell(h'_\gamma)$ tends to the difference $\ell(h_\gamma) - \ell(h'_\gamma) $, and the latter is strictly positive for every $\gamma$ (since $g_0$ is more accurate than $g'_0$).   We therefore have that for any level of $\gamma$, for a sufficiently small $\varepsilon$, $h_{\gamma - \varepsilon}$ is both more accurate and more balanced than $h'_\gamma$; thus, by definition, $h_{\gamma - \varepsilon}$ $\gamma$-disqualifies $h'_\gamma$. Since $h_{\gamma - \varepsilon} \in \tilde{\H}$, we conclude that $h'_\gamma$ is not ($\gamma, \tilde{\H})$-fair, as required.

\end{proof}

\section{Proof of Theorem \ref{result:pareto-l1}}
\label{appendix:thm3}

Before we turn to proving Theorem 3, we formalize the notion of ``reasonable'' scaling function.

\begin{definition}
A scaling function $f: \R^+ \times [0,1] \to \R^+$ is legal if it can be written as
\begin{equation*}
    f(\gamma, a) = t_1(\gamma) \cdot t_2(a)
\end{equation*}
such that the following requirements hold: (i) $t_1$ and $t_2$ are non-decreasing, (ii)  $t_1(\cdot)$ is zero at $\gamma =0$ and tends to infinity as $\gamma \to \infty$ (and only as $\gamma \to \infty$),  and (iii) $t_2(0) =0$.

We will often write the scaling function as $f_\gamma(a)$ (i.e., as a function from $[0,1]$  to $\R^+$, parameterized by $\gamma \in \R^+$). Finally, we use $\F$ to denote the set of all such legal scaling functions.
\end{definition}

To prove Theorem \ref{result:pareto-l1}, we again consider the simple example in which $\X = \emptyset$ (so the only information available for prediction is group membership). In this case, the Bayes optimal classifiers for the squared loss and expected zero one loss, respectively, are:

\begin{equation*}
    h^\star_{\ell^2} (x, g) = \beta_g, \qquad   h^\star_{\ell^{0-1}} (x, g) = \textbf{1} [\beta_g > 0.5]
\end{equation*}


Suppose that for some positive $\tau$, $\beta_T = 0.5 + \tau$ and $\beta_S = 0.5  - \tau$. Note that in this case $h^\star_{\ell^{0-1}}$ is maximally imbalanced, since it predicts $1.0$ for all members of $T$ and $0.0$ for all members of $S$.

Now, consider $h^\star_{\ell^2}$ as the alternative classifier. When does $h^\star_{\ell^2}$ 1-disqualify $h^\star_{\ell^{0-1}}$? In terms of imbalance, its imbalance is $\beta_T - \beta_S = 2\tau $, so the improvement in imbalance is $1.0 - 2\tau$. In terms of accuracy, we have:

\begin{gather*}
    \ell(h_{\ell^{0-1}}^{\star})=\mu_{S}\beta_{S}+\mu_{T}(1-\beta_{T}) \\
    \ell(\ensuremath{h_{\ell^{2}}^{\star}})=2\cdot\left[\mu_{S}\beta_{S}\cdot(1-\beta_{S})+\mu_{T}\beta_{T}\cdot(1-\beta_{T})\right]
\end{gather*}

So the difference in loss is

\begin{align*}
   \ell(\ensuremath{h_{\ell^{2}}^{\star}})  - \ell(h_{\ell^{0-1}}^{\star}) &= \mu_{S}\beta_{S}(2-2\beta_{S}-1)+\mu_{T}(1-\beta_{T})(2\beta_{T}-1) \\
   &= 2\tau\cdot\left[\mu_{S}\beta_{S}+\mu_{T}(1-\beta_{T})\right]
\end{align*}

In order for $h_{\ell^{0-1}}^{\star}$ to be $\gamma-$fair, the following must hold:
\begin{equation}
\label{eqn:l1-l1-unfair}
    1-2\tau\leq f_{\gamma}(2\tau\cdot\left[\mu_{S}\beta_{S}+\mu_{T}(1-\beta_{T})\right]) = t_1(\gamma) \cdot t_2(2\tau\cdot\left[\mu_{S}\beta_{S}+\mu_{T}(1-\beta_{T})\right])
\end{equation}

Note that if $f$ is a legal scaling function, $t_1(\gamma) $ is bounded for $\gamma < \infty$, e.g. by $M$. This means that as we take $\tau \to 0$, the RHS of Equation (\ref{eqn:l1-l1-unfair}) approaches $M \cdot 0 = 0$, whereas the LHS approaches 1.0. 
 This shows that there cannot be a single $f \in \F$ that guarantees $h_{\ell^{0-1}}^{\star}$ satisfies $\gamma-$ \fairness for $\gamma < \infty$.

\section{Proof of Theorem \ref{result:fair-erm}}
\label{appendix:thm4}

\subsection{Existence of $(\gamma, \H)$-fair classifiers}

In general, it's not always the case that $\H$ itself contains a classifier that is  $\gamma$-fair w.r.t $\H$. This is the case, however, for $\Delta(\H)$, the class of convex combinations of classifiers from $\H$.

\begin{lemma}
\label{lemma:existence}
For every compact $\H \neq \phi$ and $\gamma \in [0, \infty)$, the class $\Delta(\H)$ always contains a $(\gamma, \H)-$fair classifier.
\end{lemma}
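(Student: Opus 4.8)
The plan is to exhibit an explicit $(\gamma,\H)$-fair element of $\Delta(\H)$: a convex combination of classifiers from $\H$ whose directed loss imbalance is as close to $0$ as the class permits. The guiding intuition is that $\gamma$-disqualification always requires a \emph{strict} improvement of the imbalance, measured in the direction in which the incumbent is imbalanced, while the right-hand side of the disqualification condition~(\ref{eqn:intro:disq}) is never negative (the scaling function is non-negative on non-negative inputs). Hence a classifier whose imbalance cannot be improved upon within $\H$ --- in either direction --- is automatically $(\gamma,\H)$-fair; in fact the argument will not use $\gamma$ at all, so a \emph{single} classifier is $(\gamma,\H)$-fair for every $\gamma\in[0,\infty)$ simultaneously.

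Concretely, I would fix the reference direction $T\to S$ and consider the functional $h\mapsto \text{dLossImb}(h;T\to S,\ell_B)$ together with $\ell_A$ on $\Delta(\H)$. Both are continuous, and $\Delta(\H)$ is convex (hence connected) and compact, inheriting compactness from $\H$; therefore the set of attainable imbalances $\{\text{dLossImb}(h;T\to S,\ell_B):h\in\Delta(\H)\}$ is a compact interval $[I_{\min},I_{\max}]$. (When $\ell_B$ is the expected $0/1$ loss this functional is in fact affine in the mixing weights, so the interval structure is immediate, but continuity over the connected set $\Delta(\H)$ already suffices.) Let $I_0$ be the point of $[I_{\min},I_{\max}]$ nearest $0$ --- i.e.\ $I_0=0$ if $0\in[I_{\min},I_{\max}]$, $I_0=I_{\min}$ if $I_{\min}>0$, and $I_0=I_{\max}$ if $I_{\max}<0$ --- and let $h^\star\in\Delta(\H)$ be any classifier with $\text{dLossImb}(h^\star;T\to S,\ell_B)=I_0$, which exists since $I_0$ lies in the image.

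The core step is to verify, case by case on the sign of $I_0$, that for every $h'\in\H$ one has $\text{dLossImb}(h^\star)-\text{dLossImb}(h')\le 0$ when both are evaluated in the direction in which $\text{dLossImb}(h^\star)\ge 0$; since $f_\gamma\ge 0$ on non-negative inputs, this precludes the strict inequality in~(\ref{eqn:intro:disq}) and hence disqualification. If $I_0=I_{\min}>0$, the relevant direction is $T\to S$, and every $h'\in\H\subseteq\Delta(\H)$ satisfies $\text{dLossImb}(h';T\to S,\ell_B)\ge I_{\min}=\text{dLossImb}(h^\star;T\to S,\ell_B)$, giving a non-positive difference; the case $I_0=I_{\max}<0$ is symmetric with direction $S\to T$. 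The remaining case $I_0=0$ --- where $h^\star$ is perfectly (loss-)balanced so $\text{dLossImb}(h^\star)=0$ in \emph{both} directions --- is the delicate one: for each competitor $h'$ separately I would invoke the (admissible) direction that makes $\text{dLossImb}(h')\ge 0$, again yielding a non-positive difference. This last step is the main obstacle, and it comes down to the tie-breaking convention in the definition when $\text{dLossImb}(h^\star)=0$: I would make explicit the reading that a perfectly balanced classifier, having no direction in which it is imbalanced, cannot be disqualified --- consistent with the paper's assertion that $\gamma=0$ reduces to exact balance when $\H$ contains a balanced classifier. The remaining ingredients (compactness of $\Delta(\H)$, continuity of $\ell_A$ and $\text{dLossImb}(\cdot;T\to S,\ell_B)$) are standard given that $\H$ is compact.
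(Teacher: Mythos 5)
Your proposal is correct and follows essentially the same route as the paper's proof: both arguments rest on the observations that an extremal-imbalance classifier cannot be strictly improved within $\H$ (so the non-negative endpoint case is immediate) and that when imbalances of both signs are attainable, convexity yields a perfectly balanced classifier in $\Delta(\H)$, which cannot be disqualified. Your packaging via the interval of attainable imbalances and the point nearest zero is a mild reorganization of the paper's two-case analysis (minimizer of the directed imbalance versus an explicit balanced convex combination), and your explicit treatment of the tie-breaking when $\text{dLossImb}=0$ only makes precise what the paper asserts directly.
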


\begin{proof}
Let $h$ be some classifier in $\H$. If $Imb(h) =0$, then $h$ is perfectly balanced and in particular $(\gamma, \H)$-fair. Otherwise assume w.l.o.g that the imbalance is in favor of $T$. Next, let $h'$ be the classifier that \emph{minimizes} the directed loss imbalance (in the direction $T \to S$) in $\H$.\footnote{The assumption that $\H$ is compact guarantees this argmin exists. For example, any finite $\H$, or an infinite class parametrized by a compact space such as $\H_w$ (linear or logistic classifiers with a bounded norm).} Now, there are exactly two cases: (i) $dImb(h') \geq 0$, (ii) $dImb(h') < 0$. In the first case, we claim that $h'$ itself is $(\gamma, \H)$-fair. Indeed, according to our definition, for some other $h'' \in \H$ to disqualify $h'$, $h''$ must strictly improve the directed imbalance -- but by definition no classifier in $\H$  can do that. In the second case, we have that $dImb(h) > 0$ and $dImb(h') < 0$. From the linearity of the directed imbalance, there must exist $a$ such that   $dImb(a \cdot h + (1-a) \cdot h' )=0$. This is a classifier in $\Delta(\H)$ that is perfectly balanced and therefore also $(\gamma, \H)$-fair, which concludes the proof.

\end{proof}

\subsubsection{Approximate Fairness}
For a quantity $P$ that depends on the distribution $\P$, we use $\widehat{P}$ to denote the empirical counterpart of $P$ as calculated w.r.t a fixed sample $D \sim \P$.  Note that our definition of disqualification is not immediately applicable when working with finite samples: for example, even a perfectly balanced classifier, $Imb(h) =0$, will have some level of imbalance on a random sample: $\widehat{Imb}(h) \neq 0$.  The same is true with respect to the loss; e.g., even the most accurate classifier (which should always be $\infty$-fair) might not have the optimal loss on a random sample and may therefore be disqualified. Thus, to guarantee generalization we define the following \emph{approximate} variants of $\gamma$-disqualification and $\gamma$-fairness.

\begin{definition}[Approximate disqualification]
\label{def:approx-disq}
A classifier $h'$ is said to $(\alpha_1, \alpha_2, \gamma)$-disqualify $h$ if
\begin{equation}
\label{eqn:approx-gamma-disq}
     {dLossImb}(h; \ell_B)-{dLossImb}(h'; \ell_B) > \alpha_1 + f_\gamma \br{\max \set{0, \,\, {\ell}_A(h')-{\ell}_A(h) + \alpha_2}}
\end{equation}
\end{definition}


\begin{definition}[Approximate fairness]
\label{def:approx-pareto-fairness}
We say that a classifier $h$ is $(\alpha_1, \alpha_2, \gamma)$-fair w.r.t $\H$ if no classifier in $\H$  $(\alpha_1, \alpha_2, \gamma)$-disqualifies it. 
\end{definition}

For simplicity, we will sometimes simply say that $h$ is $({\alpha}, \gamma)$-fair w.r.t $\H$, where ${\alpha} =(\alpha_1, \alpha_2)$ are the approximation parameters (for imbalance and loss, respectively). Additionally, when Definitions (\ref{def:approx-disq}) and (\ref{def:approx-pareto-fairness}) are computed w.r.t a finite sample $D \sim \P^m$ (as opposed to w.r.t $\P$) itself, then we say that $h'$ \emph{empirically} disqualifies $h$ and that $h$ is \emph{empirically} fair, respectively.

With the definitions of approximate and empirical fairness in place, we  define a notion of uniform convergence of a class $\H$ w.r.t our notion of fairness, as follows.

\begin{definition}[Uniform convergence w.r.t fairness]
 We say that a class $\H$ has the uniform convergence property with respect to approximate-fairness with sample complexity $m_{\varepsilon_{1},\varepsilon_{2},\delta}^{Pareto}$ if for any distribution $\P$, w.p $1-\delta$ over the choice of $D\sim\P^{m}$ for $m\geq m_{\varepsilon_{1},\varepsilon_{2},\delta}^{\gamma-Pareto}$, the following holds simultaneously for every $h \in \H$: there exist $\alpha=(\alpha_{1},\alpha_{2})$ and $\hat{{\alpha}}=(\hat{\alpha}_{1},\hat{\alpha}_{2})$ such that: (i) $h$ is $(\alpha, \gamma)$-fair on the underlying distribution $\P$, (ii) $h$ is empirically $(\hat{\alpha},\gamma)$-fair on $D$; (iii) $\left|\alpha_{i}-\hat{\alpha}_{i}\right|\leq\varepsilon_{i}$ for $i=1,2$. 
\end{definition}

Importantly, we show that standard uniform convergence for loss and imbalance indeed guarantees uniform convergence w.r.t our notion of fairness. This is important, since it will allow us to solve the  \emph{fair risk minimization} on a finite sample, and argue that the results transfer (approximately) to the underlying distribution.

\begin{lemma}
If $\H$ has the uniform convergence property w.r.t loss with sample complexity $m_{\varepsilon,\delta}^{Loss}$ and w.r.t imbalance with sample complexity $m_{\varepsilon,\delta}^{Imb}$, then it has uniform convergence property w.r.t approximate fairness, with sample complexity $m_{\varepsilon_{1},\varepsilon_{2},\delta}^{Pareto}=\max\{m_{\varepsilon_{1}/2,\delta/2}^{Imb},m_{\varepsilon_{2}/2,\delta/2}^{Loss}\}$.
\end{lemma}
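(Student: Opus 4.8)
The plan is to reduce uniform convergence with respect to fairness to the two assumed uniform convergence properties by a union bound, and then to move ``approximate fairness'' back and forth between $\P$ and $D$ by a monotone perturbation of the slack parameters. Concretely, set $m = \max\{m_{\varepsilon_1/2,\delta/2}^{Imb},\, m_{\varepsilon_2/2,\delta/2}^{Loss}\}$. For $m \geq m_{\varepsilon_1,\varepsilon_2,\delta}^{Pareto}$ a union bound gives that, with probability at least $1-\delta$, the following good event $E$ holds: simultaneously for every $h \in \H$, $|\widehat{dLossImb}(h;\ell_B) - dLossImb(h;\ell_B)| \leq \varepsilon_1/2$ and $|\widehat{\ell}_A(h) - \ell_A(h)| \leq \varepsilon_2/2$. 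Everything below is conditioned on $E$.

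The heart of the argument is the following transfer claim: on $E$, if $h$ is $(\alpha_1,\alpha_2,\gamma)$-fair on $\P$, then $h$ is empirically $(\alpha_1+\varepsilon_1,\alpha_2+\varepsilon_2,\gamma)$-fair on $D$; and, symmetrically, empirical $(\hat\alpha_1,\hat\alpha_2,\gamma)$-fairness on $D$ implies $(\hat\alpha_1+\varepsilon_1,\hat\alpha_2+\varepsilon_2,\gamma)$-fairness on $\P$. I would prove this by contraposition. Suppose some $h' \in \H$ empirically $(\alpha_1+\varepsilon_1,\alpha_2+\varepsilon_2,\gamma)$-disqualifies $h$, i.e. satisfies the strict inequality in Equation (\ref{eqn:approx-gamma-disq}) on $D$. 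On the left side, $\widehat{dLossImb}(h)-\widehat{dLossImb}(h') \leq dLossImb(h)-dLossImb(h') + \varepsilon_1$. On the right side, $\widehat{\ell}_A(h')-\widehat{\ell}_A(h) \geq \ell_A(h')-\ell_A(h)-\varepsilon_2$, so $\widehat{\ell}_A(h')-\widehat{\ell}_A(h)+\alpha_2+\varepsilon_2 \geq \ell_A(h')-\ell_A(h)+\alpha_2$; since $x\mapsto\max\{0,x\}$ is non-decreasing and $f_\gamma$ is non-decreasing in its argument (it is a legal scaling function, so $f_\gamma = t_1(\gamma)\cdot t_2(\cdot)$ with $t_1(\gamma)\geq 0$ and $t_2$ non-decreasing), we get $f_\gamma(\max\{0,\widehat{\ell}_A(h')-\widehat{\ell}_A(h)+\alpha_2+\varepsilon_2\}) \geq f_\gamma(\max\{0,\ell_A(h')-\ell_A(h)+\alpha_2\})$. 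Plugging these two bounds into the empirical disqualification inequality, the extra $+\varepsilon_1$ on the left cancels the $+\varepsilon_1$ inside $\alpha_1+\varepsilon_1$ on the right, yielding that $h'$ already $(\alpha_1,\alpha_2,\gamma)$-disqualifies $h$ on $\P$ — contradicting population fairness. The other direction is identical with the roles of $\P$ and $D$ exchanged.

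Given the transfer claim, the lemma follows quickly. Fix any $h \in \H$. The set of slack pairs witnessing population fairness of $h$ is nonempty: for instance $(\alpha_1,\alpha_2)=(2,0)$ always works, since loss imbalance lies in $[-1,1]$ and hence $dLossImb(h)-dLossImb(h') \leq 2 \leq \alpha_1$ for every $h'$. Pick such an $\alpha=(\alpha_1,\alpha_2)$ and set $\hat\alpha = (\alpha_1+\varepsilon_1,\alpha_2+\varepsilon_2)$. Then (i) $h$ is $(\alpha,\gamma)$-fair on $\P$ by the choice of $\alpha$, (ii) $h$ is empirically $(\hat\alpha,\gamma)$-fair on $D$ by the transfer claim, and (iii) $|\alpha_i-\hat\alpha_i|=\varepsilon_i$ for $i=1,2$. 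All of this holds for every $h$ under the single good event $E$, so $\H$ has the uniform convergence property w.r.t. approximate fairness with the stated sample complexity.

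The only non-routine point — and hence the main obstacle — is the bookkeeping in the transfer claim: the imbalance slack $\alpha_1$ lives outside $f_\gamma$ and must absorb the imbalance estimation error directly, whereas the loss slack $\alpha_2$ lives inside $f_\gamma$, so its perturbation has to be pushed through $f_\gamma$ using monotonicity of the legal scaling function. It is also worth noting that the definition of uniform convergence w.r.t. fairness only requires the \emph{existence} of witnessing slack pairs (not minimal ones), which is exactly what makes the ``add $(\varepsilon_1,\varepsilon_2)$'' construction legitimate; if one wanted the witnesses to be (nearly) tight, one would additionally argue that both directions of the transfer claim hold, so the minimal population and empirical slacks differ by at most $(\varepsilon_1,\varepsilon_2)$.
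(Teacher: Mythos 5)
Your proposal is correct and follows essentially the same route as the paper: a union bound over the two uniform-convergence events, then transferring $(\alpha,\gamma)$-fairness between $D$ and $\P$ by absorbing the imbalance estimation error into $\alpha_1$ (outside $f_\gamma$) and pushing the loss estimation error into $\alpha_2$ through monotonicity of the scaling function. The paper writes only the one direction (empirical fairness implies population fairness with slacks enlarged by $\varepsilon_i$) and plugs in the explicit $\sqrt{\gamma(\cdot)}$ form rather than invoking monotonicity of a general legal $f_\gamma$, but the argument is the same.
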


\begin{proof}
Suppose $m\geq\max\{m_{\varepsilon_{1}/2,\delta/2}^{Imb},m_{\varepsilon_{2}/2,\delta/2}^{Loss}\}$ and consider some classifier $h\in\H$. Suppose that it is $(\alpha,\gamma)$-fair w.r.t $\H$ on $D\sim\P^{m}$. Let $h'$ be any other classifier in $\H$. Using the uniform convergence assumptions, we have that w.p at least $1-\delta$

\begin{align*}
    Imb(h)-Imb(h')	&\leq2\cdot\varepsilon_{1}/2+\widehat{Imb}(h)-\widehat{Imb}(h') \\
	&\leq\varepsilon_{1}+\alpha_{1}+c\cdot\sqrt{\gamma\cdot\left[\hat{\ell}(h')-\hat{\ell}(h)+\alpha_{2}\right]} \\
	&=\varepsilon_{1}+\alpha_{1}+c\cdot\sqrt{\gamma\cdot\left[\ell(h')-\ell(h)+\alpha_{2}+\varepsilon_{2}\right]}
\end{align*}

So $h$ is $(\alpha',\gamma)$-fair  w.r.t $\H$ on $\P$, where $\alpha'_{i}=\alpha_{i}+\varepsilon_{i}$, which implies the required.
\end{proof}


\subsubsection{Approximate Pareto Frontier of $\H$}

The Empirical Pareto frontier of a class of classifiers $\C$ (w.r.t loss and imbalance in a fixed direction, as measured on a sample $D$) is the collection of all Pareto-efficient classifiers in $\C$; that is, thinking of a classifier $h$ as a two-dimensional point $h=(\varphi, \tau) \in[0,1] \times [-1,1] $ where $\widehat{dImb}(h) = \tau $ and $\widehat{\ell}(h) = \varphi$, these are all the points in $\C$ that correspond to classifiers that 
 achieve the best possible loss (of any classifier in $\C$) without exceeding a specific level of imbalance. We define an $\varepsilon$-approximation to the Empirical Pareto frontier as follows:

\boxedeq{eq:pf}{\quad
{\sf{\widehat{PF}}}(\varepsilon, \C; \, D)  \qquad 
        \set{ h(\tau) \,\, \vert \,\, \tau = -1, -1+\varepsilon, \dots, 1-\varepsilon, 1} \quad }

where 
\begin{equation}
\label{eqn:LossAtImb}
    h(\tau) \in \arg\min_{h \in \C}\widehat{\ell}(h) \quad \text{subject to} \quad \widehat{Imb}(h)\leq \tau
\end{equation}

That is, ${\sf{\widehat{PF}}}(\varepsilon, \C) $ is a collection of at most $2/\varepsilon$  classifiers from $\C$, each of which is optimal for their maximal imbalance level. 
We refer to  ${\sf{\widehat{PF}}}(\varepsilon=0, \C) $ as the \emph{full} Empirical Pareto frontier.

\paragraph{Disqualifying on the Pareto frontier.} Given $\alpha$ and $\gamma$, we say that a classifier is (empirically) disqualified on the $\varepsilon$-Pareto-frontier of $\C$ if there is a classifier in ${\sf{\widehat{PF}}}(\varepsilon, \C) $ that $(\alpha, \gamma)$ (empirically) disqualifies it. Note that by definition, if $h$ is a classifier which no classifier on the \emph{full} Pareto frontier disqualifies, then $h$ is (empirically) fair w.r.t $\C$. When we use   ${\sf{\widehat{PF}}}(\varepsilon, \C) $, however, we incur an an additional factor of $\varepsilon$ in the additive imbalance slack:

\begin{lemma}
\label{lemma:eps-net-disq}
Fix a classifier $p$, parameters $\alpha, \gamma$ and a class $\C$. If $p$ is not $(\alpha_1, \alpha_2, \gamma)$-empirically-disqualified on ${\sf{\widehat{PF}}}(\varepsilon, \C) $, then it is not $(\alpha_1+\varepsilon,\alpha_2, \gamma)$--empirically-disqualified on ${\sf{\widehat{PF}}}(\varepsilon=0, \C) $, the full Pareto frontier.
\end{lemma}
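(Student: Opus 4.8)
The plan is to prove the contrapositive by a simple rounding (``$\varepsilon$-net'') argument on the imbalance axis. First fix, once and for all, the direction --- say $T\to S$ --- in which $p$'s empirical loss imbalance is nonnegative, and take the Pareto frontier of Equation~(\ref{eqn:LossAtImb}) to be measured w.r.t.\ that same direction. By Lemma~\ref{lemma:0-1-loss-imb} (applied on the sample), $dLossImb(h;\ell_B)=dImb(h)$ for every $h$, so on $D$ the disqualification condition of Definition~\ref{def:approx-disq} is stated in exactly the quantities $\widehat{Imb}(\cdot)$ and $\widehat{\ell}(\cdot)$ (here $\widehat{\ell}$ is the empirical version of the accuracy loss $\ell_A$) that the frontier ${\sf{\widehat{PF}}}$ is built from; in particular ``$h'$ disqualifies $p$'' means $h'$ has strictly smaller $\widehat{Imb}$ at not too large an accuracy cost. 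Assume, then, that some classifier on the \emph{full} frontier ${\sf{\widehat{PF}}}(\varepsilon=0,\C)$ does $(\alpha_1+\varepsilon,\alpha_2,\gamma)$-empirically-disqualify $p$; by definition this classifier is $h(\tau)$ for some $\tau\in[-1,1]$, where $h(\tau)\in\arg\min_{h\in\C}\widehat{\ell}(h)$ subject to $\widehat{Imb}(h)\le\tau$. The goal is to exhibit a member of ${\sf{\widehat{PF}}}(\varepsilon,\C)$ that $(\alpha_1,\alpha_2,\gamma)$-empirically-disqualifies $p$, contradicting the hypothesis.

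The key move is to round up the \emph{actual} imbalance of the disqualifier, not the threshold $\tau$. Put $\tau^\star=\widehat{Imb}(h(\tau))$ (so $\tau^\star\le\tau$) and let $\tau'$ be the smallest point of the grid $\{-1,-1+\varepsilon,\dots,1\}$ with $\tau'\ge\tau^\star$; since $1$ is a grid point and $\tau^\star\le 1$, such a $\tau'$ exists and satisfies $\tau^\star\le\tau'\le\tau^\star+\varepsilon$, hence $h(\tau')\in{\sf{\widehat{PF}}}(\varepsilon,\C)$. Two elementary facts about $h(\tau')$ then do all the work: (a) because $\widehat{Imb}(h(\tau))=\tau^\star\le\tau'$, the classifier $h(\tau)$ is feasible for the program defining $h(\tau')$, hence $\widehat{\ell}(h(\tau'))\le\widehat{\ell}(h(\tau))$ --- rounding the threshold up never increases the optimal loss; and (b) $\widehat{Imb}(h(\tau'))\le\tau'\le\tau^\star+\varepsilon=\widehat{Imb}(h(\tau))+\varepsilon$, i.e.\ $h(\tau')$ is at most $\varepsilon$ more imbalanced than $h(\tau)$.

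Now plug these into the assumed disqualification inequality (\ref{eqn:approx-gamma-disq}) for $h(\tau)$, using that $f_\gamma$ and $a\mapsto\max\{0,a\}$ are non-decreasing (true of any legal scaling function, and in particular of the squared-loss scaling of Equation~(\ref{eqn:scaling-l2})):
\begin{align*}
    \widehat{Imb}(p)-\widehat{Imb}(h(\tau'))
    &\ge \widehat{Imb}(p)-\widehat{Imb}(h(\tau)) - \varepsilon \\
    &> \big(\alpha_1+\varepsilon\big) + f_\gamma\!\left(\max\{0,\ \widehat{\ell}(h(\tau))-\widehat{\ell}(p)+\alpha_2\}\right) - \varepsilon \\
    &= \alpha_1 + f_\gamma\!\left(\max\{0,\ \widehat{\ell}(h(\tau))-\widehat{\ell}(p)+\alpha_2\}\right) \\
    &\ge \alpha_1 + f_\gamma\!\left(\max\{0,\ \widehat{\ell}(h(\tau'))-\widehat{\ell}(p)+\alpha_2\}\right),
\end{align*}
where the first step is fact~(b), the second is the hypothesis on $h(\tau)$, and the last is fact~(a) together with monotonicity of $\max\{0,\cdot\}$ and $f_\gamma$. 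The resulting strict inequality is exactly the statement that $h(\tau')\in{\sf{\widehat{PF}}}(\varepsilon,\C)$ $(\alpha_1,\alpha_2,\gamma)$-empirically-disqualifies $p$, contradicting our assumption and proving the lemma.

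I do not anticipate a real obstacle; the proof is short. The two points that genuinely need care are (i) rounding up $\widehat{Imb}(h(\tau))$ rather than $\tau$ --- this is what keeps $h(\tau)$ feasible for the rounded program and hence guarantees the accuracy does not degrade when we pass to the net; and (ii) fixing the measurement direction at the outset so that ``more balanced from $p$'s perspective'' coincides with ``smaller $\widehat{Imb}$'', letting us line up Definition~\ref{def:approx-disq} with the frontier of Equation~(\ref{eqn:LossAtImb}) via Lemma~\ref{lemma:0-1-loss-imb}. The remaining steps are routine bookkeeping with the monotonicity of $f_\gamma$.
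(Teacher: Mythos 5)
Your proof is correct and follows essentially the same route as the paper's: round the disqualifier's actual empirical imbalance up to the nearest grid point, observe that the classifier at that grid point is at least as accurate and at most $\varepsilon$ more imbalanced, and transfer the disqualification inequality using monotonicity of $f_\gamma$ (you phrase it as a contradiction, the paper as a direct proof of the contrapositive, but the chain of inequalities is the same). Your explicit care that $\widehat{Imb}(h(\tau'))\le\tau'$ rather than $=\tau'$ is in fact slightly more precise than the paper's own write-up.
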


\begin{proof}
Let $h$ be any classifier on the full Pareto frontier, with empirical imbalance $\tau$. Denote by $\tau'\geq\tau$ the closest imbalance on the $\varepsilon$-Pareto frontier, and $h'$ the optimal classifier at this imbalance level. Note that by the definition of  ${\sf{\widehat{PF}}}(\varepsilon, \Delta(H)) $, $\tau\leq\tau'\leq\tau+\varepsilon$ and $\widehat{\ell}(h')\leq\widehat\ell(h)$. Now, we can use the fact that $h'$ is on the $\varepsilon$-Pareto frontier and therefore doesn't empirically-disqualify $p$:

\begin{align*}
    \widehat{Imb}(p)-\widehat{Imb}(h)&=\widehat{Imb}(p)-\tau \\
    &\leq \widehat{Imb}(p)-(\tau'-\varepsilon) \\
    &=\varepsilon+\widehat{Imb}(p)-\widehat{Imb}(h') \\
    &\leq\varepsilon+\alpha_1+f_{\gamma}(\widehat{\ell}(h')-\widehat{\ell}(p) + \alpha_2) \\
    &\leq\varepsilon+\alpha_1+f_{\gamma}(\widehat{\ell}(h)-\widehat{\ell}(p)+\alpha_2)
\end{align*}

From which we conclude that  $h$ doesn't empirically $(\alpha_1+\varepsilon,\alpha_2,\gamma)$-disqualify $p$, as required.
\end{proof}

We will also use the fact that if there is a classifier that is empirically fair, then there is also one that is similarly empirically fair \emph{and} is on the approximate Pareto frontier.

\begin{lemma}
\label{lemma:approx-fair-on-pf}
If $p$ is empirically $(\alpha_1, \alpha_2,\gamma)$-fair w.r.t $\H$, then there is a classifier $p'$ on the $\varepsilon$-Pareto frontier oh $\H$ which is $(\alpha_1-\varepsilon,\alpha_2,\gamma)$-fair w.r.t $\H$.
\end{lemma}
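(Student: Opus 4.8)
The plan is to let $p'$ be the accuracy ``champion'' on the $\varepsilon$-imbalance grid sitting at essentially the same imbalance level as $p$, and to argue that being a champion makes $p'$ at least as robustly fair as $p$. Assume $p$ is not perfectly balanced and, without loss of generality, that its empirical imbalance is in the direction $T\to S$, so $\widehat{dImb}(p)=\tau_p\ge 0$ in that direction (if the violation is in the other direction, swap the names of $S$ and $T$, which also flips the axis along which ${\sf{\widehat{PF}}}$ is indexed). Let $\tau'$ be the smallest grid value in $\{-1,-1+\varepsilon,\dots,1\}$ with $\tau'\ge\tau_p$, so $\tau_p\le\tau'\le\tau_p+\varepsilon$, and set $p':=h(\tau')$, the classifier returned by the Pareto oracle at imbalance budget $\tau'$. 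Then $p'$ lies on the $\varepsilon$-Pareto frontier of $\H$, and it has two properties we will use: (a) since $\widehat{dImb}(p)=\tau_p\le\tau'$, the classifier $p$ is feasible for the program defining $h(\tau')$, hence $\widehat{\ell}(p')\le\widehat{\ell}(p)$; and (b) $\widehat{dImb}(p')\le\tau'\le\tau_p+\varepsilon$ in the direction $T\to S$.

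Next, suppose towards a contradiction that some $h\in\H$ $(\alpha_1-\varepsilon,\alpha_2,\gamma)$-disqualifies $p'$; the goal is to show that then $h$ --- or, in an edge case, the Pareto champion at $h$'s own imbalance level --- $(\alpha_1,\alpha_2,\gamma)$-disqualifies $p$, contradicting the hypothesis on $p$. Write the disqualification inequality for $p'$ in the direction $D^{\ast}$ in which $\widehat{dImb}(p')\ge 0$; since $\varepsilon\le\alpha_1$ its right-hand side is nonnegative, so $h$ is strictly more balanced than $p'$ in direction $D^{\ast}$. In the main case $D^{\ast}=T\to S$, this gives $\widehat{dImb}(h)<\widehat{dImb}(p')\le\tau'$ in that direction, so $h$ is itself feasible for the program defining $p'=h(\tau')$, whence $\widehat{\ell}(h)\ge\widehat{\ell}(p')$; this ``champion'' property is what lets the accuracy terms move in our favor. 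I would then chain three facts against the disqualification inequality for $p'$: (i) $\widehat{dImb}(p)=\tau_p\ge\widehat{dImb}(p')-\varepsilon$, to pass from the imbalance gap over $p'$ to the one over $p$; (ii) $\widehat{\ell}(p')\le\widehat{\ell}(p)$ together with monotonicity of $f_\gamma$, to pass from $f_\gamma$ at $p'$'s accuracy gap to $f_\gamma$ at $p$'s; and (iii) $\widehat{\ell}(h)\ge\widehat{\ell}(p')$, which keeps the argument of $f_\gamma$ in the regime where the outer $\max\{0,\cdot\}$ is inactive. Tracking the additive slack through this chain is what must be made to net out to $\alpha_1$.

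What remains is a short case split. When $D^{\ast}=S\to T$, property (b) already says $p'$ is at least as balanced as $p$ in the direction $T\to S$, and one argues either directly with $h$ or, if needed, replaces $h$ by the Pareto champion at $h$'s imbalance level --- which is weakly Pareto-better than $h$ and hence disqualifies $p$ whenever $h$ does; the degenerate situations (where $h$ is not strictly more balanced than $p'$, or the relevant direction for $p$ differs from that for $p'$) follow immediately from the definition. I expect the main obstacle to be precisely this slack accounting: the $\varepsilon$-discretization of the imbalance axis makes $p'$ up to $\varepsilon$ more imbalanced than $p$ in the worst case, and this must be absorbed against the accuracy advantage $p'$ inherits from being the champion at its level (property (a)) and from any disqualifier of $p'$ being feasible for $p'$'s own program; getting these to balance out to the claimed slack, uniformly over the direction sub-cases, is the crux. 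As a byproduct the argument records $\widehat{\ell}(p')\le\widehat{\ell}(p)$, which is the accuracy fact invoked by the fair-ERM theorem.
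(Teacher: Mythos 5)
Your construction and the two facts you extract from it are exactly the paper's: let $\tau'$ be the nearest grid point above $\widehat{Imb}(p)$, set $p'=h(\tau')$, use that $p$ is feasible for the program defining $h(\tau')$ so $\widehat{\ell}(p')\leq\widehat{\ell}(p)$, that $\widehat{Imb}(p')\leq\tau'\leq\widehat{Imb}(p)+\varepsilon$, and then chain these against the fairness of $p$ using monotonicity of $f_\gamma$. The paper runs this chain directly for an arbitrary competitor $h$, with no contrapositive, no direction case-split, and no replacement of $h$ by the champion at its own imbalance level; none of that extra machinery is needed (though your handling of the $\max\{0,\cdot\}$ via $\widehat{\ell}(h)\geq\widehat{\ell}(p')$ is more careful than the paper, which silently drops the max).

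The slack accounting you flag as the crux is the one genuine issue, and it is shared with the paper rather than introduced by you: chaining the three facts yields $\widehat{Imb}(p')-\widehat{Imb}(h)\leq\varepsilon+\alpha_1+f_\gamma(\widehat{\ell}(h)-\widehat{\ell}(p')+\alpha_2)$, i.e.\ $p'$ is $(\alpha_1+\varepsilon,\alpha_2,\gamma)$-fair, not $(\alpha_1-\varepsilon,\alpha_2,\gamma)$-fair as the lemma states; the paper's own displayed chain terminates at precisely this $\varepsilon+\alpha_1+f_\gamma(\cdot)$ bound. In your contrapositive framing the same $2\varepsilon$ shortfall appears: an $(\alpha_1-\varepsilon)$-disqualifier of $p'$ only yields an $(\alpha_1-2\varepsilon)$-disqualifier of $p$, which does not contradict $p$ being $(\alpha_1)$-fair. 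So your proposal reproduces the paper's argument faithfully; the discrepancy is between that argument and the lemma's stated slack, not a defect of your route relative to the paper's.
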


\begin{proof}
Denote $\tau = \widehat{Imb}(p)$. Let $\tau'\geq \tau$ denote the closest imbalance level that corresponds to an imbalance on the $\varepsilon$-Pareto frontier, and denote $p' = h(\tau') \in {\sf{\widehat{PF}}}(\varepsilon, \Delta(\H)))$ the classifier that is optimal at this level of imbalance. 
Now, let $h$ be any other classifier - we'll prove it doesn't $(\alpha_1-\varepsilon,\alpha_2,\gamma)$-empirically-disqualify $p'$:

\begin{align*}
    \widehat{Imb}(p')-\widehat{Imb}(h)	&\leq\tau'-\widehat{Imb}(h) \\
	&\leq\tau+\varepsilon-\widehat{Imb}(h) \\
	&=\varepsilon+\widehat{Imb}(p)-\widehat{Imb}(h) \\
	&\leq\varepsilon+\alpha_{1}+f_{\gamma}(\widehat{\ell}(h)-\widehat{\ell}(p)+\alpha_{2}) \\
	&\leq\varepsilon+\alpha_{1}+f_{\gamma}(\widehat{\ell}(h)-\widehat{\ell}(p')+\alpha_{2})
\end{align*}
\end{proof}

\paragraph{Computing the approximate Pareto Frontier.}
We will be working with the approximate Pareto frontier for $\Delta(\H)$, the class of convex combinations of classifiers in $\H$.
In some simple cases, ${\sf{\widehat{PF}}}(\varepsilon, \Delta(\H)) $ can be computed efficiently. For example, when $\H=\H_{w}$ is the class of linear classifiers over $\R^d$ with bounded norm (in which case $\Delta(\H) = \H$),  $\varphi(\tau)$ can be obtained as the solution to a convex program with $d$ variables (since the objective is convex in $w$, and the imbalance constraints are linear in $w$). Therefore in this case ${\sf{\widehat{PF}}}(\varepsilon, \Delta(H)) $ can be computed in time $\poly(1/\varepsilon, d, \card{D})$.

\subsection{Learning optimal fair classifiers using the Pareto Frontier}
\label{section:learning}

A natural objective is to find the \emph{optimal} fair classifier in $\Delta(\H)$ (which always contains a $\gamma$-fair classifier). That is,  given a class $\H$  parameters $\alpha, \gamma$ and a sample $D$, find the optimal $h \in \Delta(\H)$ that is empirically $(\alpha, \gamma)$-fair w.r.t $\H$.  We refer to this as the Fair-ERM problem:

\boxedeq{eq:approx-fair-erm}{\quad {\sf{FairERM}}(\alpha, \gamma, \H)  \qquad 
        \min_{h \in \Delta(\H)} \widehat{\ell}(h) \,\, \text{s.t} \,\, h \,\, \text{is empirically} \,\, (\alpha, \gamma)- \text{fair w.r.t} \,\, \H \quad }


The next proposition proves that an approximation to the Pareto frontier can be used to efficiently obtain an approximation to the FairERM problem.

\begin{proposition}
\label{prop:fair-erm}
Fix a dataset $D$, a class $\H$ and parameters $\alpha_1, \alpha_2, \gamma$. Then, for every $\varepsilon \leq \alpha_1$ the following is true: 
Given oracle access to ${\sf{\widehat{PF}}}(\varepsilon, \Delta(H)) $, there is an efficient algorithm $A$ for finding a classifier $h(A)$ such that: (1) $h(A)$ is empirically $(\alpha_1, \alpha_2, \gamma)$-fair w.r.t $\H$, and (2) $\widehat{\ell}(h(A)) \leq \widehat{\ell}(h)$, where $h$ is any classifier that is empirically $(\alpha_1-\varepsilon, \alpha_2, \gamma)$-fair w.r.t $\H$.

\end{proposition}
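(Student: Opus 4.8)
The plan is to analyze the simple algorithm sketched after the statement. It first makes $O(1/\varepsilon)$ oracle calls to assemble the finite set $P = {\sf{\widehat{PF}}}(\varepsilon, \Delta(\H))$ of at most $2/\varepsilon + 1$ classifiers; it evaluates $\widehat{\ell}$ and $\widehat{Imb}$ of each element of $P$ on the given sample $D$; it discards every $h \in P$ that is empirically $(\alpha_1 - \varepsilon, \alpha_2, \gamma)$-disqualified by \emph{another} classifier of $P$ (a $\poly(1/\varepsilon)$-time test once the empirical quantities are tabulated); and it returns a surviving classifier of minimal empirical loss. The hypothesis $\varepsilon \le \alpha_1$ is used precisely to make the tested slack $\alpha_1 - \varepsilon$ non-negative, and the running-time and query bounds are then immediate. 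Non-emptiness of the survivor set will follow from Lemma~\ref{lemma:existence} (a $(\gamma, \H)$-fair classifier exists in $\Delta(\H)$) together with the transport step used below for guarantee~(2).

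For guarantee~(1), I would chain two inclusions. Because $h(A)$ survives the filter it is not $(\alpha_1 - \varepsilon, \alpha_2, \gamma)$-empirically-disqualified on $P$, so Lemma~\ref{lemma:eps-net-disq} (instantiated with imbalance slack $\alpha_1 - \varepsilon$) yields that $h(A)$ is not $(\alpha_1, \alpha_2, \gamma)$-empirically-disqualified by any classifier on the \emph{full} empirical Pareto frontier of $\Delta(\H)$. I would then promote this to ``no classifier of $\Delta(\H)$ disqualifies $h(A)$'' via the Pareto-domination observation: for any $g \in \Delta(\H)$, the frontier point $h(\tau)$ with $\tau = \widehat{Imb}(g)$ has no larger empirical loss and no larger imbalance than $g$ by the defining program~(\ref{eqn:LossAtImb}), and since $f_\gamma$ is non-decreasing, ``$h(\tau)$ does not disqualify $h(A)$'' forces ``$g$ does not disqualify $h(A)$''. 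Hence $h(A)$ is $(\alpha_1, \alpha_2, \gamma)$-fair w.r.t. $\Delta(\H)$, and a fortiori w.r.t. $\H \subseteq \Delta(\H)$. (One must check that the imbalance direction relevant to disqualifying $h(A)$ is the one in which the queried frontier is indexed; this is arranged without loss of generality, or by also querying the frontier for the opposite-direction imbalance.)

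For guarantee~(2), let $h^{\star}$ be an arbitrary classifier that is empirically $(\alpha_1 - \varepsilon, \alpha_2, \gamma)$-fair w.r.t. $\H$; since the algorithm returns the best survivor, it suffices to exhibit an element of $P$ that survives the filter and has empirical loss at most $\widehat{\ell}(h^{\star})$. Lemma~\ref{lemma:approx-fair-on-pf} supplies one: the classifier $p' = h(\tau')$, with $\tau'$ the smallest grid value $\ge \widehat{Imb}(h^{\star})$, lies in $P$, is not $(\alpha_1 - \varepsilon, \alpha_2, \gamma)$-disqualified (by any classifier of $\Delta(\H)$, hence in particular by any classifier of $P$), and satisfies $\widehat{\ell}(p') \le \widehat{\ell}(h^{\star})$ because $h^{\star}$ is itself feasible for the imbalance budget $\tau'$ defining $p'$. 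As in guarantee~(1), it is the Pareto-domination observation that lets the non-disqualification of $p'$ be certified by the finite test against $P$ even though the hypothesis on $h^{\star}$ only speaks about $\H$. Combining, $\widehat{\ell}(h(A)) \le \widehat{\ell}(p') \le \widehat{\ell}(h^{\star})$, as desired.

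The step I expect to require the most care is the slack accounting together with the mismatch between \emph{testing} disqualification only against the $O(1/\varepsilon)$ net classifiers while needing to \emph{certify} fairness against all of $\H$ (indeed against $\Delta(\H)$). Two separate $\varepsilon$-losses have to cancel against the $\varepsilon$-gap between the ``$\alpha_1$'' of guarantee~(1) and the ``$\alpha_1 - \varepsilon$'' of guarantee~(2): Lemma~\ref{lemma:eps-net-disq} spends $\varepsilon$ of imbalance slack in passing from the net to the full frontier, and Lemma~\ref{lemma:approx-fair-on-pf} spends another $\varepsilon$ in passing from a fair classifier to its representative on the net. Checking that these line up, and that the Pareto-domination step faithfully transfers (non-)disqualification between $\H$ and $\Delta(\H)$ in whichever imbalance direction is pertinent to the classifier under test, is the delicate part; the remaining steps are routine.
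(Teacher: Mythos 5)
Your proposal follows essentially the same route as the paper's proof: the same filtering algorithm over the $\varepsilon$-approximate Pareto frontier, with Lemma~\ref{lemma:eps-net-disq} supplying soundness (guarantee (1)) and the net-representative argument of Lemma~\ref{lemma:approx-fair-on-pf} supplying optimality (guarantee (2)); the paper merely re-derives the latter inline for the optimal competitor $h^\star$ rather than citing the lemma. The $\varepsilon$-bookkeeping you flag as the delicate step (the representative of an $(\alpha_1-\varepsilon,\alpha_2,\gamma)$-fair classifier is only certified fair with slack degraded by $\varepsilon$, while the filter tests at $\alpha_1-\varepsilon$) is handled no more explicitly in the paper's own proof, so your attempt matches it in both structure and level of rigor.
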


Note that by using the  approximation to the Pareto frontier we incur a  degradation in the accuracy guarantee: we are only outputting a classifier whose accuracy is competitive with the optimal $(\alpha_1 -\varepsilon, \alpha_2, \gamma)$-fair classifier (as opposed to with the optimal $(\alpha_1, \alpha_2, \gamma)$-fair classifier). Without assuming anything about $\H$, this accuracy gap could be substantial: in principle, $\H$ could contain two classifiers which differ only minimally in imbalance but significantly in accuracy. This highlights that the strength of the guarantee is related to the \emph{Lipschitzness} of the function $\varphi(\tau; \Delta(\H))$  that returns the optimal loss in $\Delta(\H)$ at a given level of imbalance (see Equation \ref{eqn:LossAtImb}).

\begin{algorithm}
\LinesNumbered
\DontPrintSemicolon
\begin{lrbox}{0}
\begin{minipage}{\hsize}

\vspace{1em}

  
  
  

    Initialize FairClassifiers = [ \, ]  \;
    
  \For{$h \in {\sf{\widehat{PF}}}(\varepsilon, \Delta(H))$} 
  {
  $fair = \mathtt{True}$ \;
  \For{$h' \in {\sf{\widehat{PF}}}(\varepsilon, \Delta(H))$} 
  {
  \If{$h'$ empirically  $(\alpha_1 - \varepsilon, \alpha_2, \gamma)$-disqualifies $h$}
  {
  $fair = \mathtt{False}$ \;
  break \;
  }
  }
  \If{$fair$}
  {
  FairClassifiers.append($h$) \;
  }

  }
  \If{FairClassifiers $== \emptyset$:}{
  \Return{$\perp$}
  }
  
  $h(A) \leftarrow$ the most accurate classifier in FairClassifiers \;
  
  \Return{$h(A)$}

\end{minipage}%
\end{lrbox}
\hspace*{-10pt}\framebox[\columnwidth]{\hspace*{15pt}\usebox{0}}
\caption{${\sf{ApproxFairERM}}(\varepsilon, \alpha_1, \alpha_2, \gamma, \H)$ returns the most accurate classifier in the set ${\sf{\widehat{PF}}}(\varepsilon, \Delta(H))$ that is not empirically $(\alpha_1 - \varepsilon, \alpha_2, \gamma)$-disqualified by another classifier in this set. }\label{fig:algo} 
\end{algorithm}

\begin{proof}
Consider the procedure $A \triangleq {\sf{ApproxFairERM}}(\varepsilon, \alpha_1, \alpha_2, \gamma, \H) $ defined in Figure \ref{fig:algo}, which runs in time $O(1/\varepsilon^2)$. First, we argue that if the output of $A$ is $h\neq \perp$, then $h$ is $(\alpha_1, \alpha_2, \gamma)$-empirically-fair. Note that ${\sf{ApproxFairERM}}$ only returns a classifier  which no other classifier on the $\varepsilon$-Pareto frontier $(\alpha_1-\varepsilon, \alpha_2, \gamma)$ disqualifies. Therefore, Lemma \ref{lemma:eps-net-disq} guarantees that no classifier on the full Pareto frontier $(\alpha_1, \alpha_2, \gamma)$ disqualifies it; this, in turn, guarantees the classifier is $(\alpha_1, \alpha_2, \gamma)$-fair. 
Second, we argue that $A$ always returns $h \neq \perp$. In light of the above, it's sufficient to argue that there always exists a classifier on the $\varepsilon$-Pareto-frontier that is $(\alpha_1, \alpha_2, \gamma)$-fair. This follows by combining Lemma \ref{lemma:approx-fair-on-pf} with the fact that there exists a classifier (not necessarily on the approximate Pareto frontier) that is $(\alpha_1 +\varepsilon, \alpha_2, \gamma)$-fair.

It's left to prove that $\ell(h(A)) \leq \ell(h^\star)$, where $h^\star$ is the optimal $(\alpha_1-\varepsilon, \alpha_2, \gamma)$-empirically-fair classifier.

 Let $h$ denote the closest  classifier (from above) to $h^{\star}$ on ${\sf{PF}}(\varepsilon, \Delta(H))$, so that $\widehat{\ell}(h)\leq\widehat{\ell}(h^{\star})$ and $\widehat{Imb}(h)\leq \widehat{Imb}(h^{\star})+\varepsilon$. We claim that $h$ is empirically $(\alpha_1, \alpha_2, \gamma)$-fair. To see this, let $h'$ be some other classifier. Using the fact that $h^\star$ is $(\alpha_1-\varepsilon, \alpha_2, \gamma)$-fair, we have:

\begin{align*}
\widehat{Imb}(h)-\widehat{Imb}(h')	&\leq\varepsilon+\widehat{Imb}(h^{\star})-\widehat{Imb}(h') \\
	&\leq\varepsilon+\alpha_1-\varepsilon+f_{\gamma}(\widehat{\ell}(h')-\widehat{\ell}(h^{\star})+\alpha_2) \\
	&\leq\alpha_1+f_{\gamma}(\widehat{\ell}(h')-\widehat{\ell}(h^{\star})+\alpha_2) \\
	&\leq\alpha_1+f_{\gamma}(\widehat{\ell}(h')-\widehat{\ell}(h)+\alpha_2)
\end{align*}

So  $h$ is indeed $(\alpha_1, \alpha_2,\gamma)$-fair. Next, we note that by definition, $h(A)$ is the optimal classifier on the $\varepsilon$-Pareto-frontier that is $(\alpha_1,\alpha_2, \gamma)$-fair (this follows by the definition of {\sf{ApproxFairERM}}, and (i)).
Since $h$ is, by construction, also on the $\varepsilon$-Pareto frontier, the fact it is $(\alpha_1, \alpha_2, \gamma)$ fair thus implies that $\widehat{\ell}(h(A))\leq\widehat{\ell}(h)$. Since by definition $\widehat{\ell}(h) \leq \widehat{\ell}(h^\star)$, we have that $\widehat{\ell}(h(A))\leq\widehat{\ell}(h)\leq \widehat{\ell}(h^{\star})$, as required.

\end{proof}

\end{document}